\def\ENet{Expert }
\def\ANet{Amateur }
\def\system{ExpertNet }
\newcommand{\rb}[1]{\textcolor{red}{Robert comment: #1}}
\newtheorem{lemma}{Lemma}
\begin{document}
\pagestyle{headings}
\mainmatter
\def\ECCV16SubNumber{***}  

\title{TrustNet: Deep Learning from Trusted Data Against (A)symmetric Label Noise} 

\titlerunning{ECCV-20 submission ID \ECCV20SubNumber}

\authorrunning{ECCV-20 submission ID \ECCV20SubNumber}

\author{Anonymous ECCV submission}
\institute{Paper ID \ECCV16SubNumber}

\maketitle

\begin{abstract}
Robustness to label noise is a critical property for weakly-supervised classifiers trained on massive datasets. 
Most related work however overlook two important aspects.
First, most do not take advantage of trusted data possibly available, e.g., from expert validation. 
Second, many target only random symmetric label noise, disregarding asymmetric noise but noise sources range from simple errors to adversarial poisoning of targeted class labels.
In this paper, we design TrustNet that specifically learns the pattern of noise corruption, being it both symmetric or targeted, from a small portion of trusted dataset. Then, TrustNet leverages the learned pattern via a robust loss function that weights the given labels against the inferred labels.
The sample weight is adjusted based on model uncertainty across training epochs.
We evaluate our solutions on synthetic noise for CIFAR-10 and CIFAR-100, and real-world noise for tiny ImageNet. We compare against state-of-the-art methods demonstrating our superiority especially under asymmetric class-targeted label noise. \rb{What about Clothing1M?}
\keywords{We would like to encourage you to list your keywords within
the abstract section}
\end{abstract}

\section{Introduction}
Dirty data is a long standing challenge for machine learning models. The recent surge of self-generated data significantly aggravates the dirty data problems~\cite{blum2003noise,yan2014learning}. It is shown that data sets collected from the wild can contain corrupted labels as high as 40\%~\cite{xiao2015learning}. Even widely-adopted curated data sets, e.g., CIFAR-10, have incorrectly labeled images~\cite{chen:2019:ICML:understanding}. The high learning capacity of deep neural networks can memorize the pattern of correct data and, unfortunately, dirty data as well~\cite{arpit2017closer}. As a result, when training on data with non-negligible dirty labels~\cite{Zhang2017memorization}, the learning accuracy of deep neural networks can significantly drop.
 
While the prior art deems it imperative to derive robust neural networks that are resilient to label noise, there is a disparity in which noise patterns to consider and evaluate. The majority of robust deep networks against dirty labels focus on synthetic label noise, which can be symmetric or asymmetric. The former case~\cite{chen:2019:ICML:understanding} assumes noise labels can be corrupted into any other classes with equal probability, where the later case~\cite{wang2019symmetric} assumes only a particular set of classes are swapped, e.g., truck images are often mislabeled as automobile class in CIFAR-10. Patterns of noisy labels observed from real-life data sets, e.g., Clothing1M~\cite{xiao2015learning}, exhibit not only high percentages of label noise but also more complicated patterns mixing symmetric and asymmetric noises.  Moreover, there is a disagreement among related work on which noise patterns are more detrimental to regular networks and difficult to defend against~\cite{ma2018d2l,Vahdat:2017:NeurIPS:discriminative}. 

Noise patterns are commonly captured in transition matrices~\cite{chen:2019:ICML:understanding}, which describe the probability of how a true label is corrupted into another fake and observable label. A large body of prior art estimates such a labels transition matrix without knowing the true labels and incorporates such information into the learning process~\cite{patrini2017making}, particularly the computation of the loss function. Accurate estimation of the transition matrix can improve the robustness of neural networks, but it is extremely complicated when lacking the information on true labels and encountering sophisticated noise patterns~\cite{hendrycks2018using}. 

In contrast, adversarial learning~\cite{thekumparampil2018robustness,liu2019zk} advocates to train classification networks jointly with adversarial examples, i.e., corrupted labels. As such, the transition matrix can be conveniently learned with a sufficient number of adversarial examples and their ground truth. This is what we term \emph{trusted data} that contain not only given labels but also true labels validated by experts.
Hendrycks et. al~\cite{hendrycks2018using} shows that the resilience of deep networks can be greatly improved by leveraging such trusted data. The challenge is how to learn adversarial networks using a minimum set of trusted data that is difficult to obtain.

In this paper, we first develop a thorough understanding of the noise patterns, ranging from symmetric, asymmetric, and a mix of them. We extend the analysis from~\cite{chen:2019:ICML:understanding} and derive the analytical bound for classification accuracy for any given noise pattern. Our theoretical analysis compares real-world noise patterns against synthetic, symmetric, and simple asymmetric, noise.
Our findings on a diverse set of noise patterns lead us to focus on challenging cases where existing robust networks~\cite{patrini2017making,ma2018dimensionality} may fall short of defending against. 

The second contribution of this paper is a novel noise resilient deep network, namely \TNet, which leverages a small holdout of trusted data to estimate the noise transition matrix efficiently. Different from conventional adversarial learning, \TNet only tries to estimate the noise transition matrix, instead of learning the overall representation of adversarial data and hence requires only a small set of adversarial examples. Specifically, we first estimate the noise transition matrix through training \system~\cite{ExpertNet} on a small set of trusted data, i.e., 10\% of the training data. Such a trained \system can take images and their given labels as inputs and provide estimated labels -- additional label information. The core training step of \TNet is to weight the loss function from the given labels dynamically and inferred labels from \system. The specific weights are dynamically adjusted every epoch, based on the model confidence. We evaluate \TNet on CIFAR-10 and CIFAR-100, whose labels are corrupted by synthetically generated noise transition patterns. \TNet is able to achieve higher accuracy than SCL~\cite{wang2019symmetric}, D2L~\cite{ma2018dimensionality}, Boostrap~\cite{Reed:2015:ICLR:Bootstrap} and Forward~\cite{patrini2017making} in all most challenging scenarios. We also demonstrate the effectiveness of \TNet on a noisy real-world data set, i.e., Clothing1M, and also achieve higher accuracy. 
\section{Related work}
The problem of noisy labeled data has been addressed in several recent studies. 
We first summarize the impact of noise patterns, followed by the defense strategies that specifically leverage noise patterns.

\subsection{Impact of Noise Patterns}
Understanding the effect of label noise on the performance of the learning models is crucial to make them robust.
The impact of label noise in deep neural networks is first characterized~\cite{chen:2019:ICML:understanding} by the theoretical testing accuracy over a limited set of noise patterns. \cite{Vahdat:2017:NeurIPS:discriminative} suggests an undirected graphical model for modeling label noise in deep neural networks and indicates the symmetric noise to be more challenging than asymmetric. Having multiple untrusted data sources is studied by~\cite{Konstantinov:ICML19:Untrusted:dirtylabel} by considering label noise as one of the attributes of mistrust and assigning weights to different sources based on their reliability. However, it remains unclear how various kinds of noise patterns impact learning. 

\subsection{Resilient Networks Against (A)Symmetric Noise}

{\bf Symmetric Noise}
The following studies tackle the problem of symmetric label noise, meaning that corrupted labels can be any of the remaining classes with equal probability. 
One approach is to train the network based on noise resilient loss functions. D2L~\cite{ma2018d2l} monitors the changes in Local Intrinsic Dimension (LID) and incorporates LID into their loss function for the symmetric label noise. 
\cite{hendrycks2018using} introduces a loss correction technique and estimates a label corruption matrix for symmetric and asymmetric noise. 

Leveraging two different neural networks is another method to overcome label noise. Co-teaching~\cite{han2018co} trains two neural networks while crossing the samples with the smallest loss between the networks for both symmetric and asymmetric noise patterns.
Co-teaching$+$~\cite{yu2019does} focuses on updating by disagreement between the two networks on small-loss samples. 
\cite{Jiang:2018:ICML:MentorNet} combats uniform label flipping via a curriculum provided by the MentorNet for the StudentNet. However, these works do not explicitly model the noise pattern in their resilient models.

\noindent{\bf Asymmetric Noise}
Another stream of related work considers both symmetric and asymmetric noise.
One key idea is to differentiate clean and noisy samples by exploring their dissimilarity. 
~\cite{Han:2019:ICCV:selflearning,Lee:2018:CVPR:Cleannet} introduce class prototypes for each class and compare the samples with the prototypes to detect noisy and clean samples.
Decoupling~\cite{malach:2017:NIPS:decoupling} uses two neural networks and updates the networks when a disagreement happens between the networks. 

Estimation of the noise transition matrix is another line of research to overcome label noise. Masking~\cite{Han:2018:NeurIPS:Masking} uses human cognition to estimate noise and build a noise transition matrix. Forward~\cite{patrini2017making} corrects the deep neural network loss function while benefiting from a noise transition matrix. However, these studies do not consider the information in the noisy labels to estimate the matrix.

Building a robust loss function against label noise has been studied in the following works, although the dynamics of the learning model seem to be neglected. \cite{Zhang:2018:NeurIPS:GCEloss} provides a generalization of categorical cross entropy loss function for deep neural networks. The study~\cite{wang2019symmetric}, namely SCL, uses symmetric cross entropy as the loss function.  Bootstrapping~\cite{Reed:2015:ICLR:Bootstrap} combines perceptual consistency with the prediction objective by using a reconstruction loss. The research in~\cite{Goldberger:2017:ICLR:noise_layer,sukhbaatar:2014:training} changes the architecture of the neural network by adding a linear layer on top. 

In this work, we study both symmetric and various kinds of asymmetric label noise. 
We use the information in the noisy labels to 
estimate the noise transition matrix in an adversarial learning manner. Furthermore, we benefit from a dynamic update in our proposed loss function to tackle the label noise problem. 

\section{Understanding DNNs trained with noisy labels}
In this section, we present theoretical bounds on the test accuracy of deep neural networks assumed to have high learning capacity. Test accuracy is a common metric defined as the probability that the predicted label is equal to the given label. We extend prior art results~\cite{chen:2019:ICML:understanding} by deriving bounds for generic label noise distributions. We apply our formulation on three exemplary study cases and verify the theoretical bounds against experimental data. Finally, we compare bounds for different noise patterns providing insights on their difficulty for regular networks. 

\subsection{Preliminaries}
\label{subsec:preliminaries}
Consider the classification problem having dataset $\mathcal{D} = \{(\boldsymbol{x}_1, {y}_1),  (\boldsymbol{x}_2,  {y}_2), ... , (\boldsymbol{x}_N, {y}_N) \}$ where $\boldsymbol{x}_k$ denotes the $k^{th}$ observed sample, and ${y}_k \in C := \{0, ..., c-1\}$ the corresponding given class label over $c$ classes affected by label noise. Let $\mathcal{F}(\cdot, \boldsymbol{\theta})$ denote a neural network parameterized by $\boldsymbol{\theta}$, and $y^\mathcal{F}$ denote the predicted label of $\boldsymbol{x}$ given by the network $y^\mathcal{F} =\mathcal{F}(\boldsymbol{x}, \boldsymbol{\theta})$. The label corruption process is characterised by a transition matrix $T_{ij} = P(y = j | \hat{y} = i)$ where $\hat{y}$ is the true label. Synthetic noise patterns are expressed as a label corruption probability $\varepsilon$ plus a noise label distribution. For example, symmetric noise is defined by $\varepsilon$ describing the corruption probability, i.e. $T_{ii} = 1 -\varepsilon, \forall i \in C$, plus a uniform label distribution across the other labels, i.e. $T_{ij} = \frac{\varepsilon}{c-1}, \forall i\neq j \in C$. 



\subsection{New Test Accuracy Bounds}

To extend the previous bounds, we first consider the case where all classes are affected by the same noise ratio. We then further extend to the case where only a subset of classes is affected by noise.

{\bf All class noise}: All classes are affected by the same noise ratio $\varepsilon$, i.e., meaning only $1-\varepsilon$ percentage of given labels are the true labels.
\begin{lemma}\label{th:general}
For noise with fixed noise ratio $\varepsilon$ for all classes and any given label distribution $P(y = j), i \neq j$, the test accuracy is

\begin{equation} \label{eq:general}
\begin{split}
   P({y}^\mathcal{F} = y) = (1 - \varepsilon)^2 + \varepsilon^2 \sum_{j \neq i}^{C}{P^2(y = j)}
   \end{split}
\end{equation}
\end{lemma}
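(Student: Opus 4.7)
The plan is to follow the idealization of~\cite{chen:2019:ICML:understanding}: with infinite data and unbounded capacity, the trained network $\mathcal{F}$ reproduces the noisy channel, so $P(y^\mathcal{F}=j\mid \hat{y}=i)=T_{ij}$. I also use that, conditional on the true label $\hat{y}$, the test label $y$ and the prediction $y^\mathcal{F}$ are independent, since both are independent draws through the same transition $T$.

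First I would condition on the true label and apply this conditional independence:
\begin{equation*}
P(y^\mathcal{F}=y)=\sum_{i\in C}P(\hat{y}=i)\sum_{j\in C}P(y^\mathcal{F}=j\mid \hat{y}=i)\,P(y=j\mid \hat{y}=i)=\sum_{i}P(\hat{y}=i)\sum_{j} T_{ij}^{2}.
\end{equation*}
Next I would plug in the symmetric-ratio structure $T_{ii}=1-\varepsilon$ and, for $j\neq i$, $T_{ij}=\varepsilon\,P(y=j)$ (i.e., the noise mass $\varepsilon$ is distributed across off-diagonal entries according to the given-label distribution in the lemma statement). Splitting the sum into its diagonal and off-diagonal pieces yields $\sum_{j} T_{ij}^{2}=(1-\varepsilon)^{2}+\varepsilon^{2}\sum_{j\neq i}P^{2}(y=j)$. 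Under the implicit assumption that this inner sum is the same for every true class $i$, it pulls out of the outer sum; using $\sum_{i}P(\hat{y}=i)=1$ then collapses the remainder to the claimed identity.

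The main obstacle is justifying the capacity hypothesis $P(y^\mathcal{F}=j\mid \hat{y}=i)=T_{ij}$, that is, that the high-capacity network mimics the full posterior induced by the noisy labels rather than snapping to its mode; this is where genuine learning theory would enter, and I would flag it as a modelling assumption borrowed from~\cite{chen:2019:ICML:understanding} (perhaps stated as a formal Assumption at the start of Section~\ref{subsec:preliminaries}). A secondary subtlety is pinning down the meaning of $P(y=j)$ for $j\neq i$ in the statement: the bound is well-posed only if this off-diagonal distribution is common to all true classes, so I would either assume this explicitly or read the formula class-by-class and average at the end. Once both issues are resolved, the actual computation is just the two-line manipulation above.
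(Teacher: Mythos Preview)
Your proposal is correct and follows essentially the same computation as the paper: condition on the true label, replace $P(y^{\mathcal{F}}=y\mid\hat y=i)$ by $\sum_j T_{ij}^2$, split diagonal and off-diagonal terms, and collapse the outer sum using $\sum_i P(\hat y=i)=1$. If anything you are more explicit than the paper, which silently assumes both the high-capacity hypothesis $P(y^{\mathcal{F}}=j\mid\hat y=i)=T_{ij}$ and the class-independence of the off-diagonal distribution that you correctly flag.
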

\begin{proof}
We have that $T_{ii} = 1 -\varepsilon, \forall i \in C$ since all classes are affected by the same noise ratio. Moreover, the probability of selecting noisy class labels is scaled by the noise ratio $T_{ij} = \varepsilon\;P(y=j), j\neq i \in C$. Now:
\begin{equation} \label{eq:generalproof}
\begin{split}
   P(y^\mathcal{F} = y)
      & = \sum_{i}^C{P(\hat{y} = i) P(y^\mathcal{F} = y | \hat{y} = i)} \\
      & = \sum_{i}^C{P(\hat{y} = i)}\sum_{j}^C{T^2_{ij}} \\
      & = \sum_{i}^C{P(\hat{y} = i)}[T^2_{ii} + \sum_{j \neq i}^{C}{T^2_{ij}}] \\
      & = \sum_{i}^C{P(\hat{y} = i)}[(1 - \varepsilon)^2 + \varepsilon^2 \sum_{j \neq i}^{C}{P^2(y = j)}].
   \end{split}
\end{equation}

Since $\sum_{i}^C{P(\hat{y} = i)} = 1$, we obtain Eq.~\ref{eq:general}. \hfill $\qed$
\end{proof}

{\bf Partial class noise}: in this pattern only a subset $S$ of class labels are affected by a noise ratio, whereas the set $U = C \; \backslash \; S$ is unaffected by any label noise.

\begin{lemma}\label{th:partial}
For partial class noise with equal class label probability, where $S$ is the set affected by noise with ratio $\varepsilon$ and $U$ is the set of unaffected labels, the test accuracy is
\begin{equation} \label{eq:partial}
\begin{split}
   P(y^\mathcal{F} = y) = \frac{|U|}{|C|} + \frac{|S|}{|C|}[{(1 - \varepsilon)}^2 + \varepsilon^2 \sum_{j \neq i}^{S}{P^2(y = j)}]
   \end{split}
\end{equation}
\end{lemma}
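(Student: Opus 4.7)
The plan is to reduce the statement to Lemma~\ref{th:general} by partitioning the class index set $C$ into the unaffected subset $U$ and the noise-affected subset $S$, and treating the two groups separately in the expansion $P(y^\mathcal{F}=y)=\sum_{i}^{C} P(\hat{y}=i)\sum_{j}^{C} T_{ij}^2$ that was already derived in the proof of Lemma~\ref{th:general}. The equal-class-prior hypothesis gives $P(\hat{y}=i)=1/|C|$ throughout, so the only work is to evaluate the inner sum $\sum_j T_{ij}^2$ on each of the two blocks.

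First I would record the transition probabilities in each regime. For $i \in U$ no corruption occurs, so $T_{ii}=1$ and $T_{ij}=0$ for $j\neq i$, giving $\sum_j T_{ij}^2 = 1$ for every $i \in U$. For $i \in S$, the noise is confined to $S$ (so that the labels in $U$ truly remain unaffected), meaning $T_{ii}=1-\varepsilon$ and $T_{ij}=\varepsilon\,P(y=j)$ for $j\in S\setminus\{i\}$, while $T_{ij}=0$ for $j\in U$. Applying the same algebraic step as in the proof of Lemma~\ref{th:general} to this restricted block yields
\begin{equation*}
\sum_{j}^{C} T_{ij}^2 = (1-\varepsilon)^2 + \varepsilon^2 \sum_{j\neq i}^{S} P^2(y=j), \quad i \in S.
\end{equation*}

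Next I would assemble the two contributions. Summing the $|U|$ identity contributions weighted by $1/|C|$ produces the term $|U|/|C|$, and summing the $|S|$ identical noisy contributions weighted by $1/|C|$ produces $(|S|/|C|)\bigl[(1-\varepsilon)^2+\varepsilon^2\sum_{j\neq i}^{S} P^2(y=j)\bigr]$, which matches Eq.~\ref{eq:partial} exactly.

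The main obstacle will be making precise the meaning of the expression $\sum_{j\neq i}^{S} P^2(y=j)$ inside a sum over $i \in S$: the reader needs to understand that on the partial-noise block the noisy-label distribution is supported only on $S$, and that the index constraint $j\neq i$ refers to the particular $i\in S$ being conditioned on (so the sum is actually independent of $i$ under the equal-prior assumption, which is what lets the outer sum collapse to the factor $|S|/|C|$). Once this is spelled out, possibly by writing $P(y=j)$ as the restriction $P(y=j \mid y\in S)$, the rest is a direct specialization of Lemma~\ref{th:general} to the noisy block and requires no new calculation.
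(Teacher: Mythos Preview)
Your proposal is correct and follows essentially the same route as the paper: both arguments split $C=U\cup S$, use the equal-prior weight $P(\hat y=i)=1/|C|$, set $T_{ii}=1$ on $U$ and $T_{ii}=1-\varepsilon$, $T_{ij}=\varepsilon\,P(y=j)$ on $S$, and then evaluate $\sum_j T_{ij}^2$ blockwise before summing. Your remark about the hidden $i$-dependence of $\sum_{j\neq i}^{S}P^2(y=j)$ is a point the paper itself glosses over in its final factorization step, so you are if anything slightly more careful.
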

\begin{proof}
We have that for affected labels in $S$ the same noise transition definitions hold, i.e. $T_{ii} = 1 -\varepsilon, \forall i \in S$ and $T_{ij} = \varepsilon\;P(y=j), j\neq i \in S$. For unaffected labels we have that $\varepsilon = 0$ hence $T_{ii} = 1, \forall i \in U$ and $T_{ij} = 0, j\neq i \in U$. 
Moreover, $P(\hat{y} = i) = \frac{1}{|C|}$ assuming all class labels are equally probable. Now:

\begin{equation} \label{eq:partialproof}
\begin{split}
  P(y^f = y) = & \sum_{i}^C{P(\hat{y} = i) P(y^f = y | \hat{y} = i)}\\
    = & \sum_{i}^{|U|}{P(\hat{y} = i) P(y^f = y | \hat{y} = i)} +
        \sum_{i'}^{|S|}{P(\hat{y} = i') P(y^f = y | \hat{y} = i')} \\
    = & \sum_{i}^U{P(\hat{y} = i)}\sum_{j}^U{T^2_{ij}} +
        \sum_{i'}^S{P(\hat{y} = i')}\sum_{j'}^S{T^2_{i'j'}} \\
    = & \sum_{i}^U{P(\hat{y} = i)}[T^2_{ii} + \sum_{j \neq i}^{U}{T^2_{ij}}]
      + \sum_{i'}^S{P(\hat{y} = i')}[T^2_{i'i'} + \sum_{j' \neq i'}^{S}{T^2_{i'j'}}] \\
    = & \frac{1}{|C|}\sum_{i}^U[T^2_{ii} + \sum_{j \neq i}^{U}{T^2_{ij}}]
      + \frac{1}{|C|}\sum_{i'}^S[T^2_{i'i'} + \sum_{j' \neq i'}^{S}{T^2_{i'j'}}] \\
    = & \frac{1}{|C|}\sum_{i}^U 1
      + \frac{1}{|C|}\sum_{i'}^S[(1 -\varepsilon)^2 + \varepsilon^2 \sum_{j' \neq i'}^{S}{P^2(y = j')}] \\
    = & \frac{|U|}{|C|}
      + \frac{|S|}{|C|}[(1 -\varepsilon)^2 + \varepsilon^2 \sum_{j' \neq i'}^{S}{P^2(y = j')}] \\
\end{split}
\end{equation}
\hfill $\qed$
\end{proof}

\subsection{Validation of Theoretical Bounds}

We validate our new bounds on three study cases by applying our theoretical bounds to three different noise patterns for CIFAR-10 under different noise ratios and comparing the results against empirical accuracy results.

As first new noise pattern, we consider noisy class labels following a truncated normal distribution $\mathcal{N}^T(\mu, \sigma, a, b)$. This noise pattern is motivated by the targeted adversarial attacks~\cite{goodfellow:2015:explaining}. We scale $\mathcal{N}^T(\mu, \sigma, a, b)$ by the number of classes and center it around a target class $\tilde{c}$ by setting $\mu = \tilde{c}$ and use $\sigma$ to control how spread out the noise is. $a$ and $b$ simply define the class label boundaries, i.e. $a=0$ and $b=c-1$. To compute the bound, we estimate the empirical distribution at the different classes and apply Eq.~\ref{eq:general}. The second noise pattern extends our previous case. This distribution, referred in short as bimodal hereon, combines two truncated normal distributions. It has two peaks in $\mu_1$ and $\mu_2$ with two different shapes controlled by $\sigma_1$ and $\sigma_2$. The peaks are centered on two different target classes $\mu_1 = \tilde{c_1}$ and $\mu_2 = \tilde{c_2}$.
The third noise pattern considers partial targeted noise where only a subset of classes, $[2, 3, 4, 5, 9]$ in our example, are affected by targeted noise, i.e. swapped with a specific other class. Here we rely on Eq.~\ref{eq:partial} to compute the bound. This noise pattern has been studied in~\cite{wang2019symmetric}.

\begin{figure*}[t]
	\centering
	{
	\subfloat[Truncated normal, $\mu = 1, \sigma = 0.5$]{
	    \label{subfig:cm_normal}
	    \includegraphics[width=0.3\linewidth]{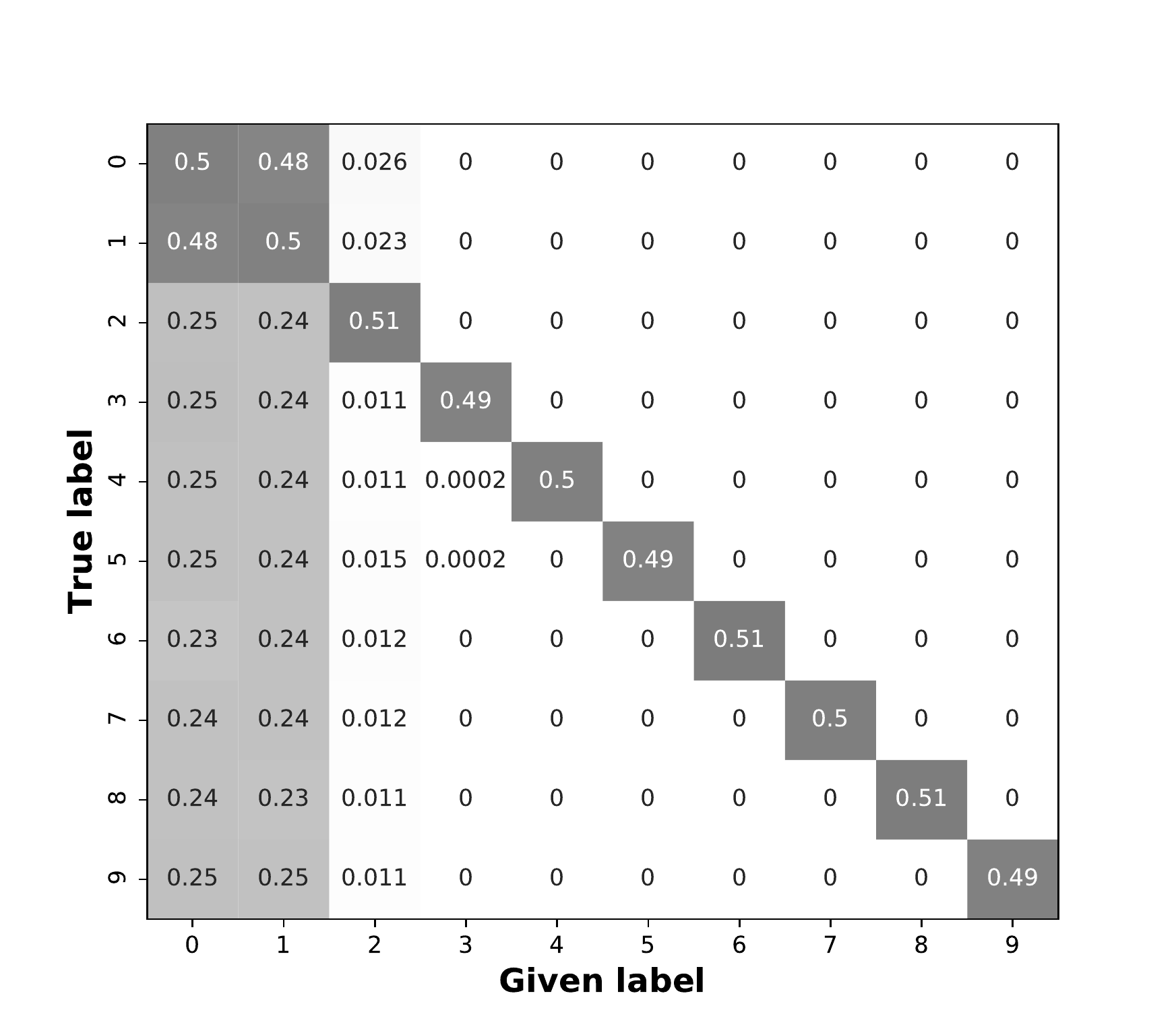}}
    \hfill
	\subfloat[Bimodal, $\mu_1 = 2,\sigma_1 = 1,\mu_2 = 7,\sigma_2 = 3$ ]{
	    \label{subfig:cm_bimodal}
	    \includegraphics[width=0.3\linewidth]{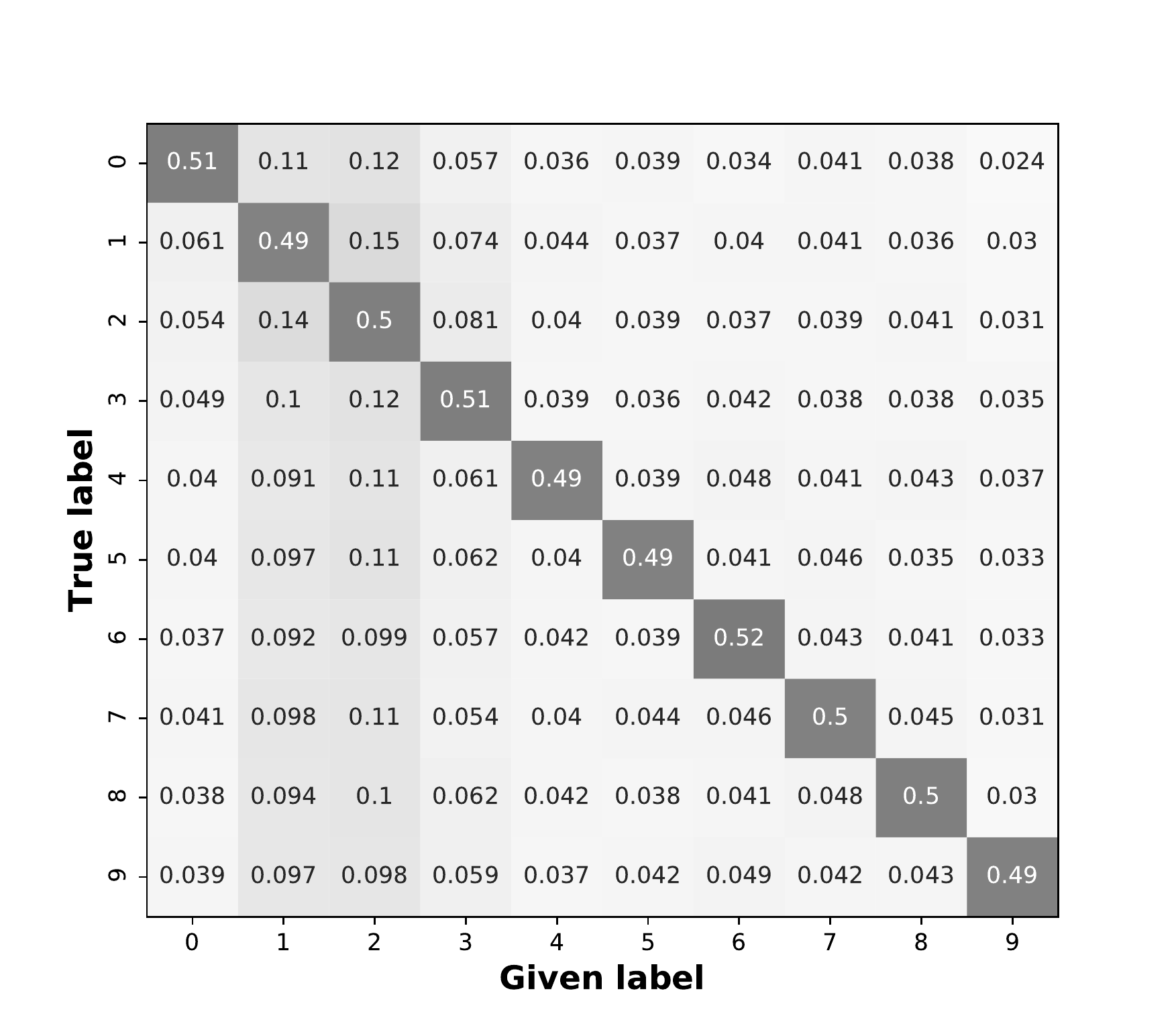}}
    \hfill
	\subfloat[Partial targeted]{
	    \label{subfig:cm_asymmetric}
	    \includegraphics[width=0.3\linewidth]{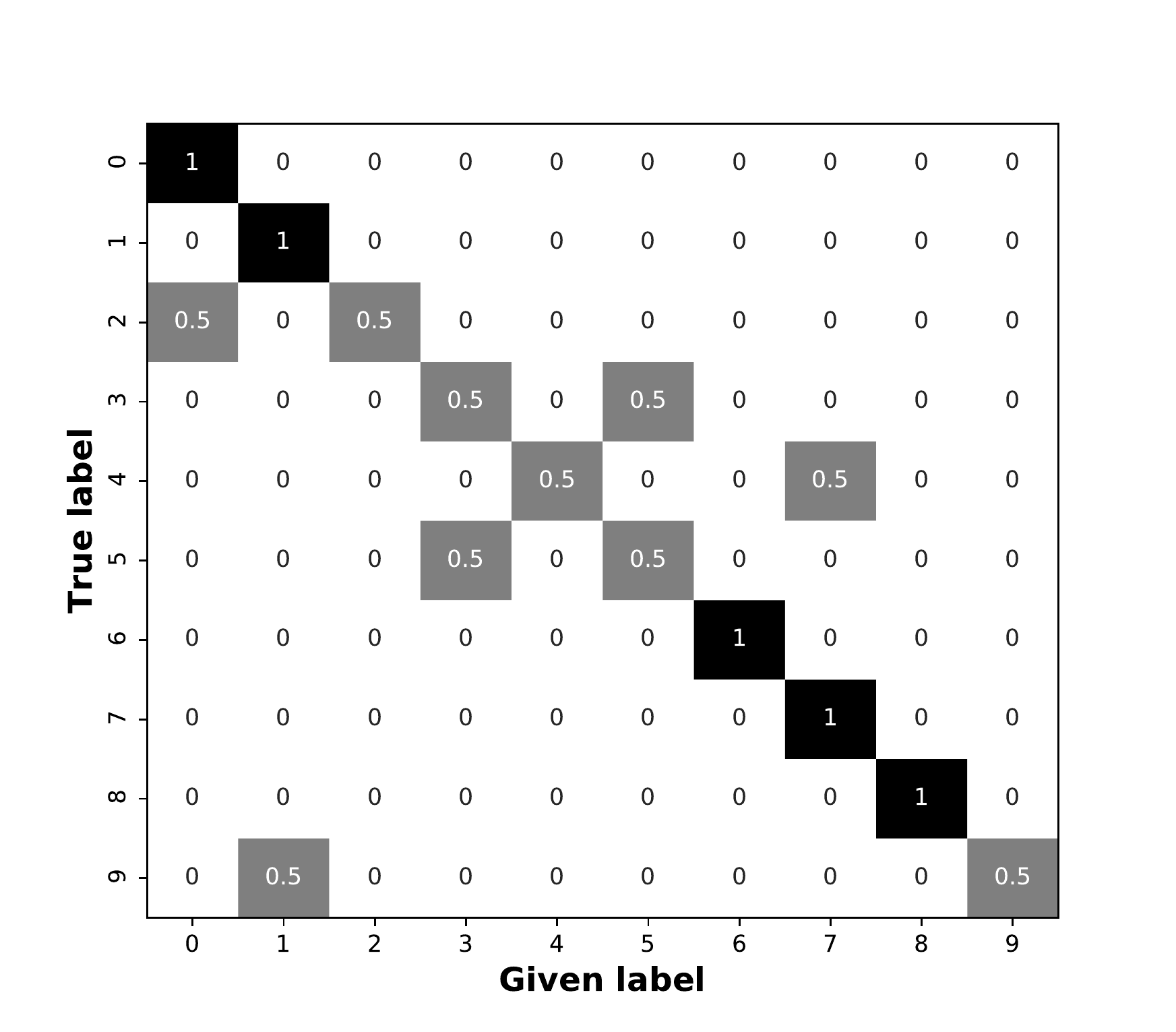}} \\
	    
	\subfloat[Truncated normal, $\mu = 1, \sigma = 0.5$]{
	    \label{subfig:acc_theoemp_tn}
	    \includegraphics[width=0.3\linewidth]{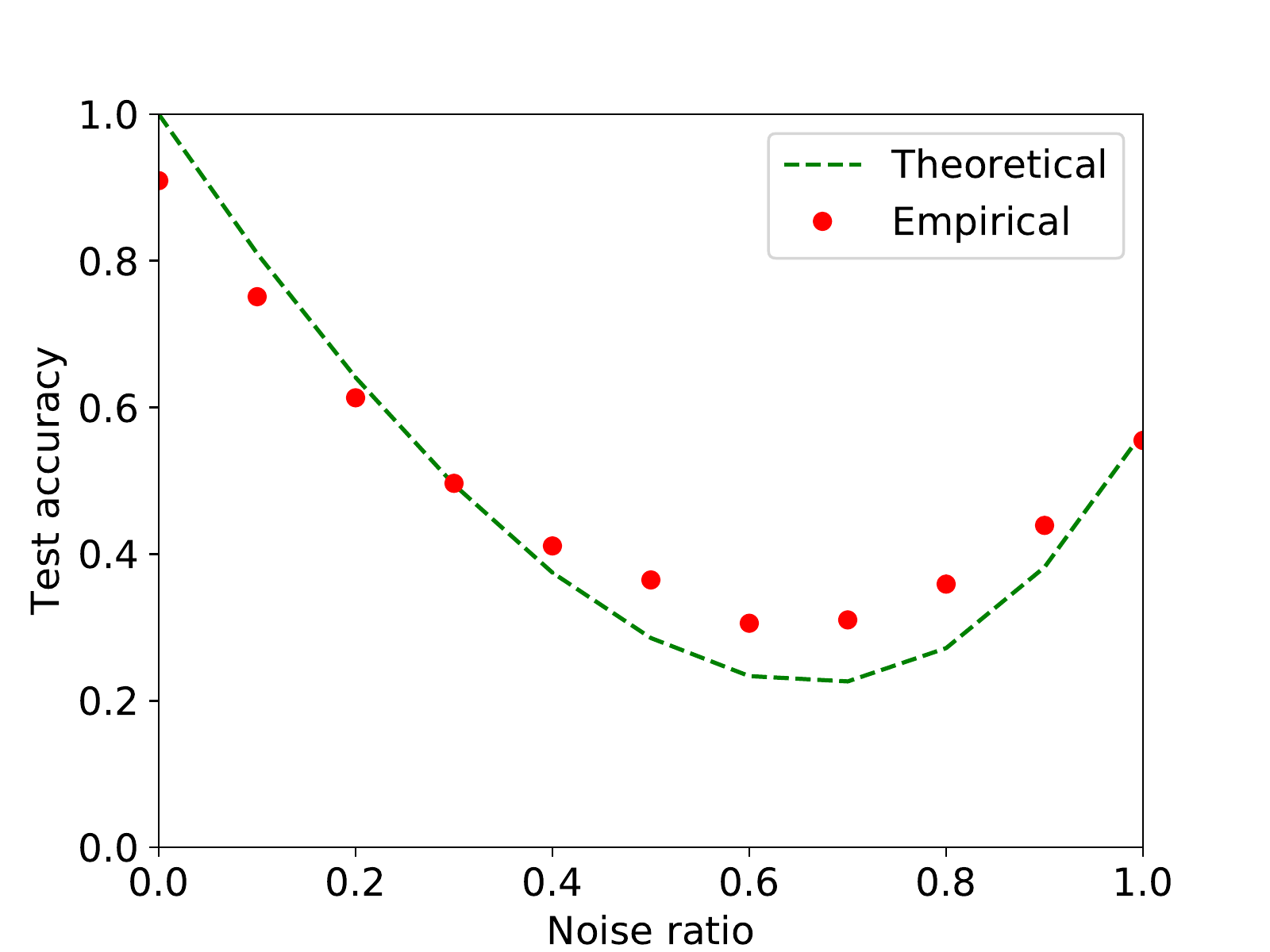}}
    \hfill
    \subfloat[Bimodal, $\mu_1 = 2,\sigma_1 = 0.5,\mu_2 = 7,\sigma_2 = 5$ ]{
	    \label{subfig:acc_theoemp_bi}
	    \includegraphics[width=0.3\linewidth]{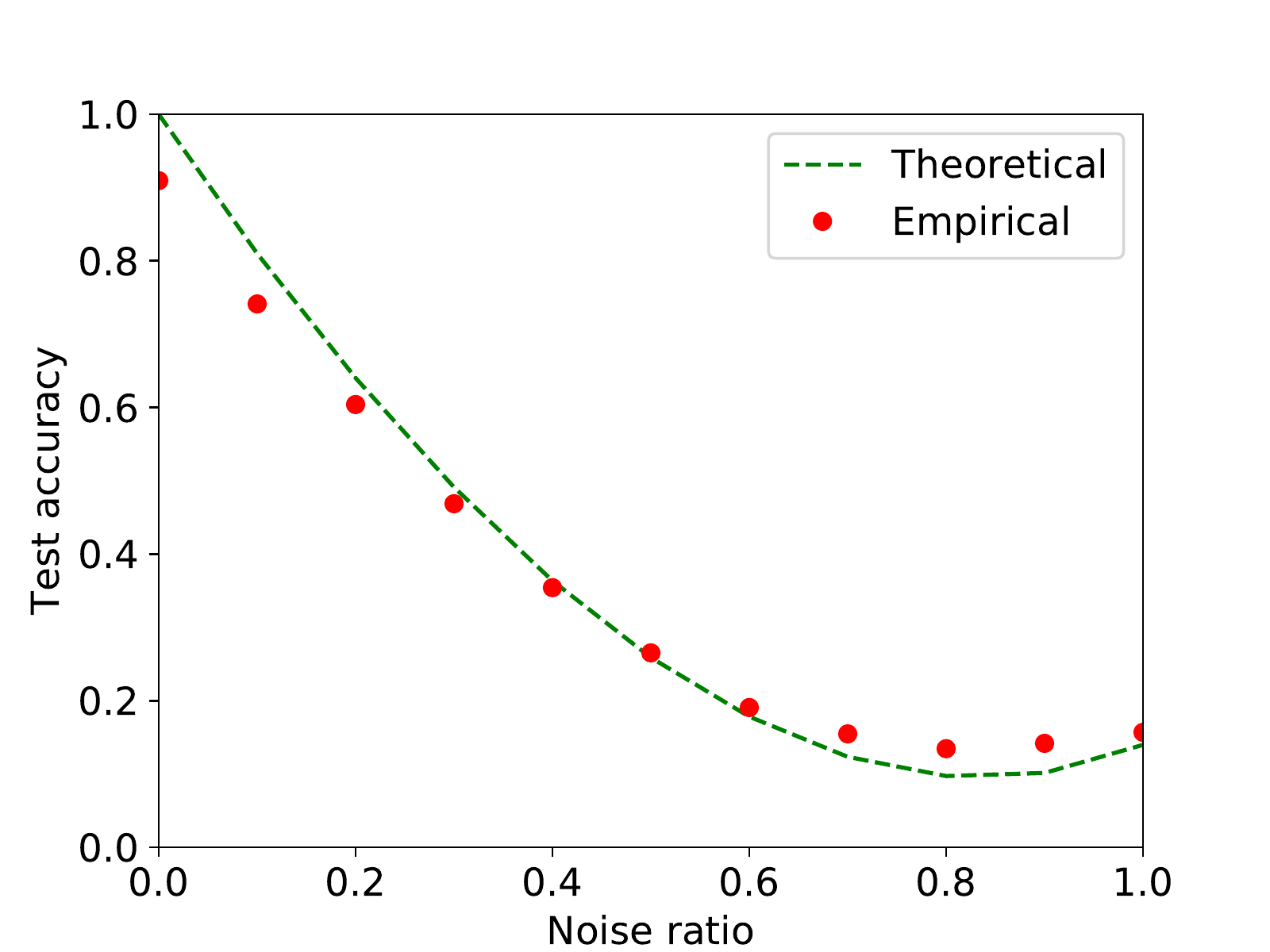}}
	\hfill
	\subfloat[Partial targeted]{
	    \label{subfig:acc_theoemp_pt}
	    \includegraphics[width=0.3\linewidth]{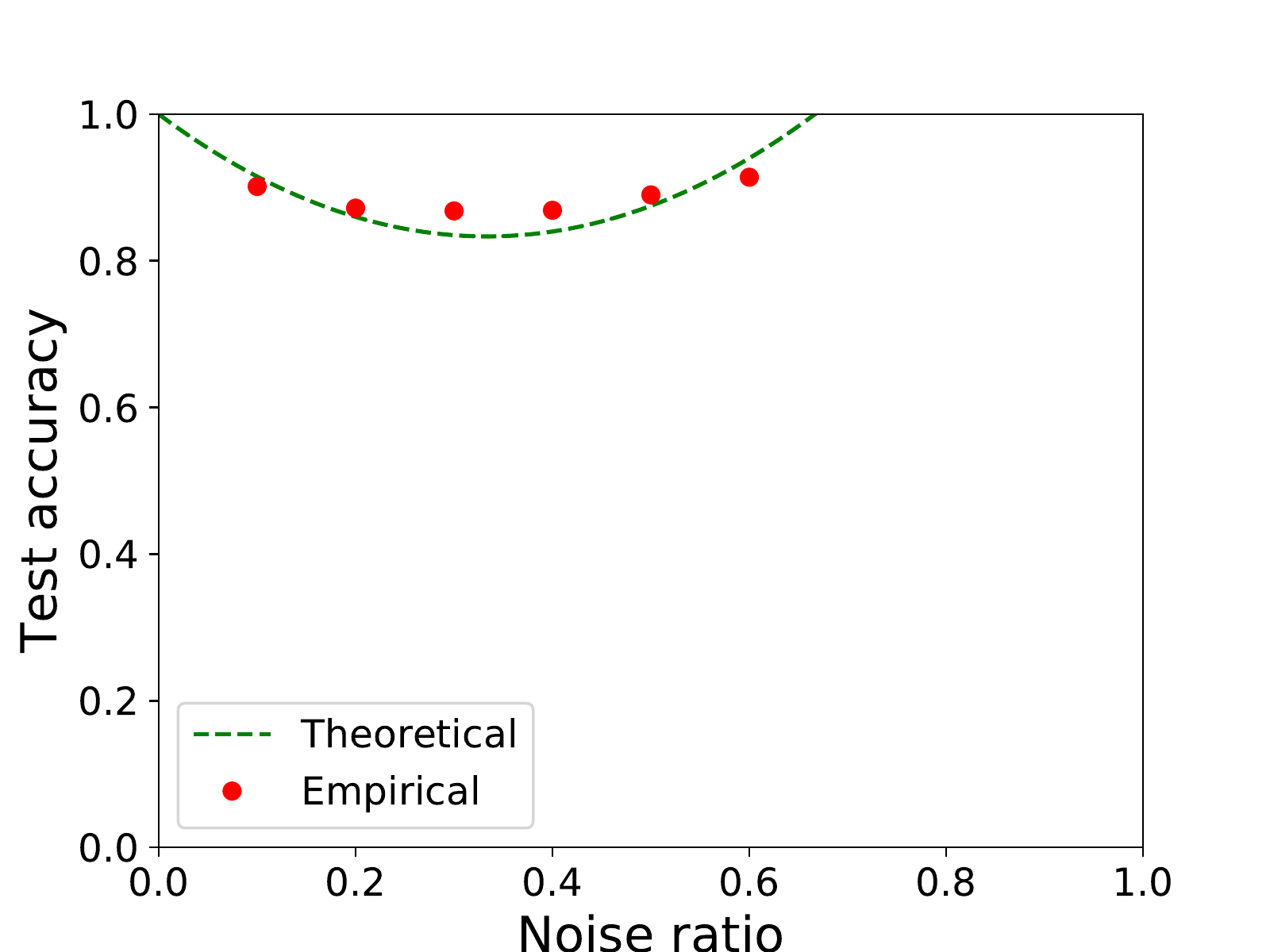}}    
	}
	\caption{Three study cases on CIFAR-10 with 10 classes. Top row shows the transition matrices for noise ratio $\varepsilon = 0.5$. Bottom row compares the theoretical bounds (lines) against empirical test accuracy results (points) for $0 \leq \varepsilon \leq 1$.}
	\label{fig:tmatrices}
\end{figure*}

Fig.~\ref{fig:tmatrices} summarizes the results. Top row shows the noise transition matrices for the three study noise patterns under noise ratio $\varepsilon = 0.5$. Bottom row compares the theoretical bounds against the empirical results obtained by corrupting the CIFAR-10 dataset with different noise ratios from clean to fully corrupted data: $0 \leq \varepsilon \leq 1$. The highest deviation between theoretical (lines) and empirical (points) results is shown for truncated normal noise around the deepest dip in the test accuracy, i.e., $\varepsilon = 0.7$. Here the theoretical accuracy result is 8.67\% points worse than the measured result. For the other two, the deviation is at most 4.06\% and 2.97\% (without considering $\varepsilon = 0.0$) for bimodal and partial targeted noise, respectively. Overall, the theoretical and empirical results match well across the whole range of noise ratios.

\subsection{Impact of Different Noise Patterns}


\begin{figure*}[t]
	\centering
	{
    \subfloat[Truncated normal: $\mu=1$, different $\sigma$]{
	    \label{subfig:diffvar}
	    \includegraphics[width=0.3\linewidth]{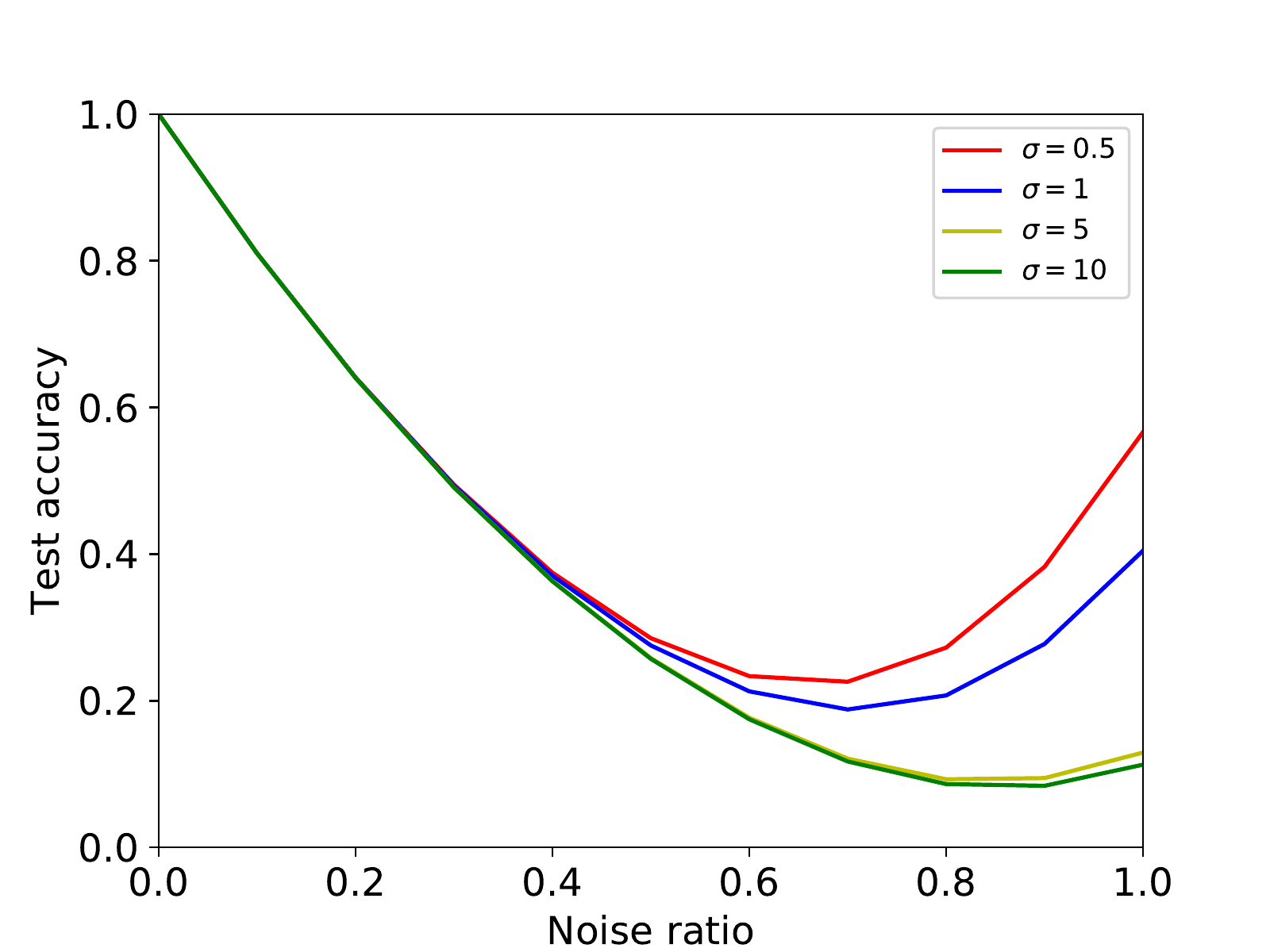}}
	 \hfill
	\subfloat[Bimodal: $\mu_1=2, \mu_2=7$, different $\sigma_1, \sigma_2$]{
	    \label{subfig:acc_types}
	    \includegraphics[width=0.3\linewidth]{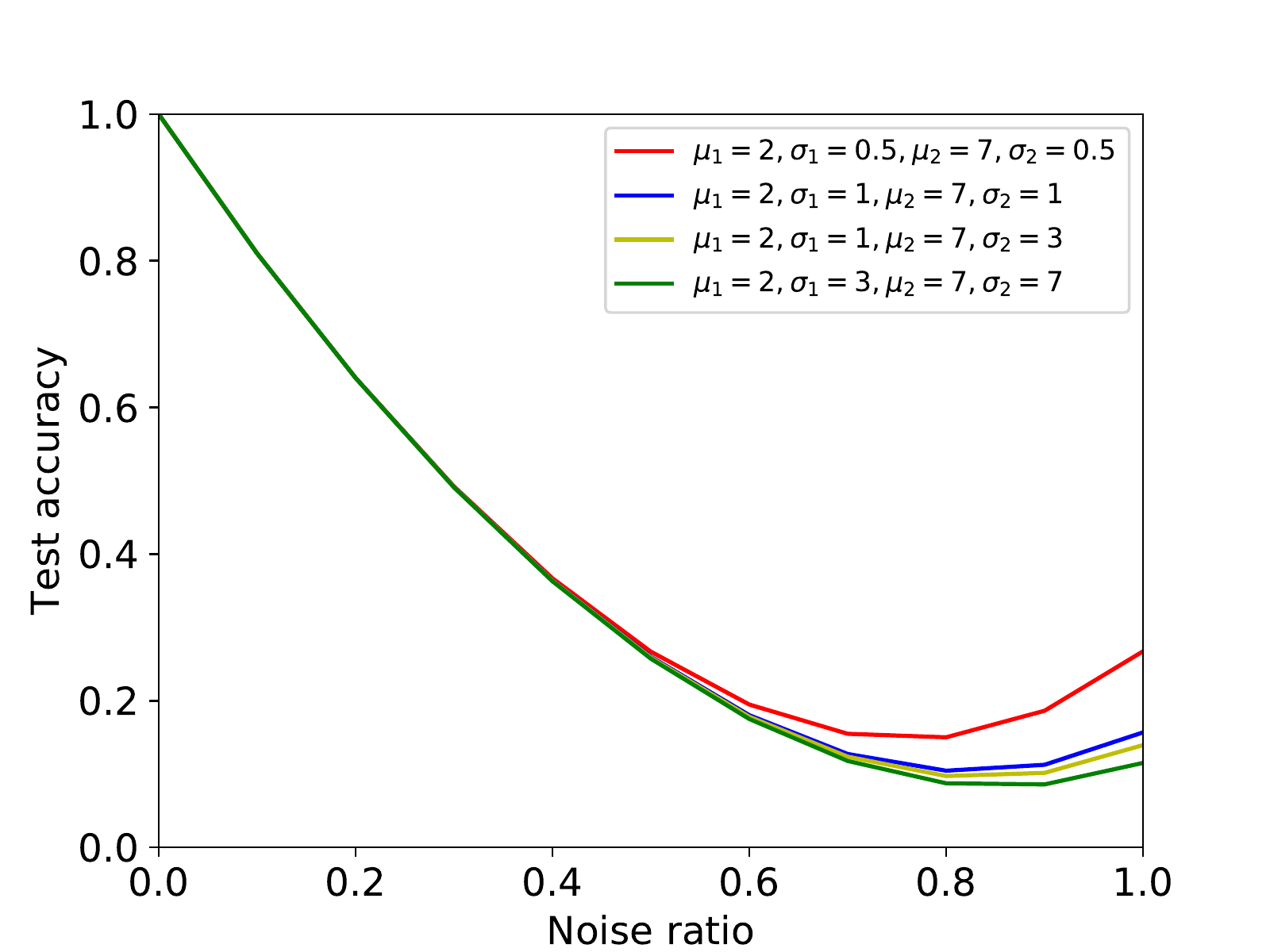}}
	\hfill
	\subfloat[All]{
	    \label{subfig:all}
	    \includegraphics[width=0.3\linewidth]{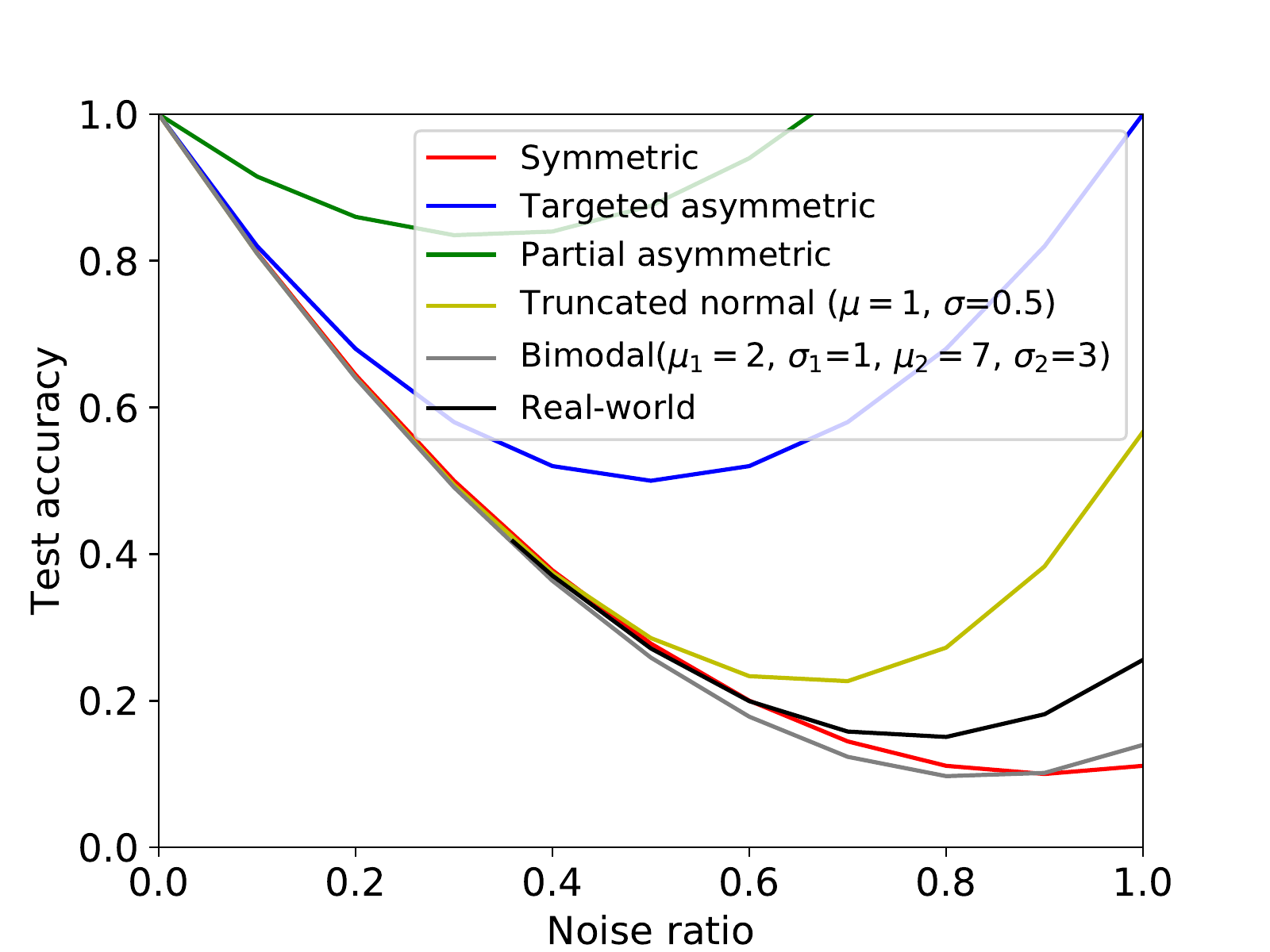}}
    }
	\caption{Analytical results: the impact of noise patterns with different parameters on the test accuracy across noise ratios.}
	\label{fig:theo_vs_emp}
\end{figure*}

We conclude by using our theoretical bounds to compare the impact on test accuracy of different noise patterns. First, we consider different parameters for truncated normal and bimodal noises and finish with comparing all noise patterns from here, in~\cite{chen:2019:ICML:understanding} and the real-world noise pattern from~\cite{xiao2015learning}. 

Fig.~\ref{fig:theo_vs_emp} shows all results. We start with truncated normal noise with a fixed target class and different $\sigma$. Higher values of $\sigma$ result into a wider spread of label noise across adjacent classes. Fig.~\ref{subfig:diffvar} shows the results. Under lower noise ratios, e.g., $\varepsilon < 0.5$, the impact of varying $\sigma$ is negligible, as shown by the overlapping curves. After that, we see that the most challenging cases are with high values of $\sigma$ due to the wider spread of corrupted labels deviating from their true classes.

Similarly to the previous analysis, for bimodal noise, we fix the target classes, i.e., $\mu_1$ and $\mu_2$, while varying the variances around the two peaks, i.e., $\sigma_1$ and $\sigma_2$. 
Overall the results are similar to truncated normal noise, but we can observe that the sensitivity to sigma is lower (see Fig.~\ref{subfig:acc_types}) even if on average test accuracy of truncated normal is higher than bimodal noise. For instance, in case of $\varepsilon = 1.0$ the difference between $\sigma = 0.5$ and $\sigma = 1$ is 16.26\% for truncated normal, but only 11.11\% for bimodal. Hence, bimodal tends to be more challenging since lines for different $\sigma$ are all more condensed around low values of accuracy with respect to truncated normal noise. 

To conclude, we compare all synthetic symmetric and asymmetric noise patterns considered against the real-world noise pattern observed on the {Clothing1M} dataset~\cite{xiao2015learning} (see Fig.~\ref{subfig:all}). The measured noise ratio of this dataset is $\varepsilon = 0.41$. To create the test accuracy bound, we scale the noise pattern to different $\varepsilon$ by redistributing the noise, such as to maintain all relative ratios between noise transition matrix elements per class. This imposes a lower limit on the noise ratio of $\varepsilon = 0.36$ to be able to keep all elements within the range $[0, 1]$. As intuition can suggest, partial targeted noise has the least impact since it only affects a fraction of classes. More interestingly, we see that the decrease in accuracy for all asymmetric noise patterns is not monotonic. 
When noise ratios are high, another class becomes dominant, and thus it is easier to counter the noise pattern.
On the contrary, all curves tend to overlap at smaller noise ratios, i.e., noise patterns play a weaker role compared to at higher noise ratios. Finally, the real-world noise pattern almost overlaps with bimodal. This might be due that errors in Clothing1M often are between two classes sharing visual patterns~\cite{xiao2015learning}. 

\section{Methodology}
In this section, we present our proposed robust learning framework, \TNet, featuring on a light weight estimation of noise patterns and a robust loss function.  

\subsection{\TNet Architecture}

\begin{figure}[t]
	\includegraphics[width=0.7\columnwidth]{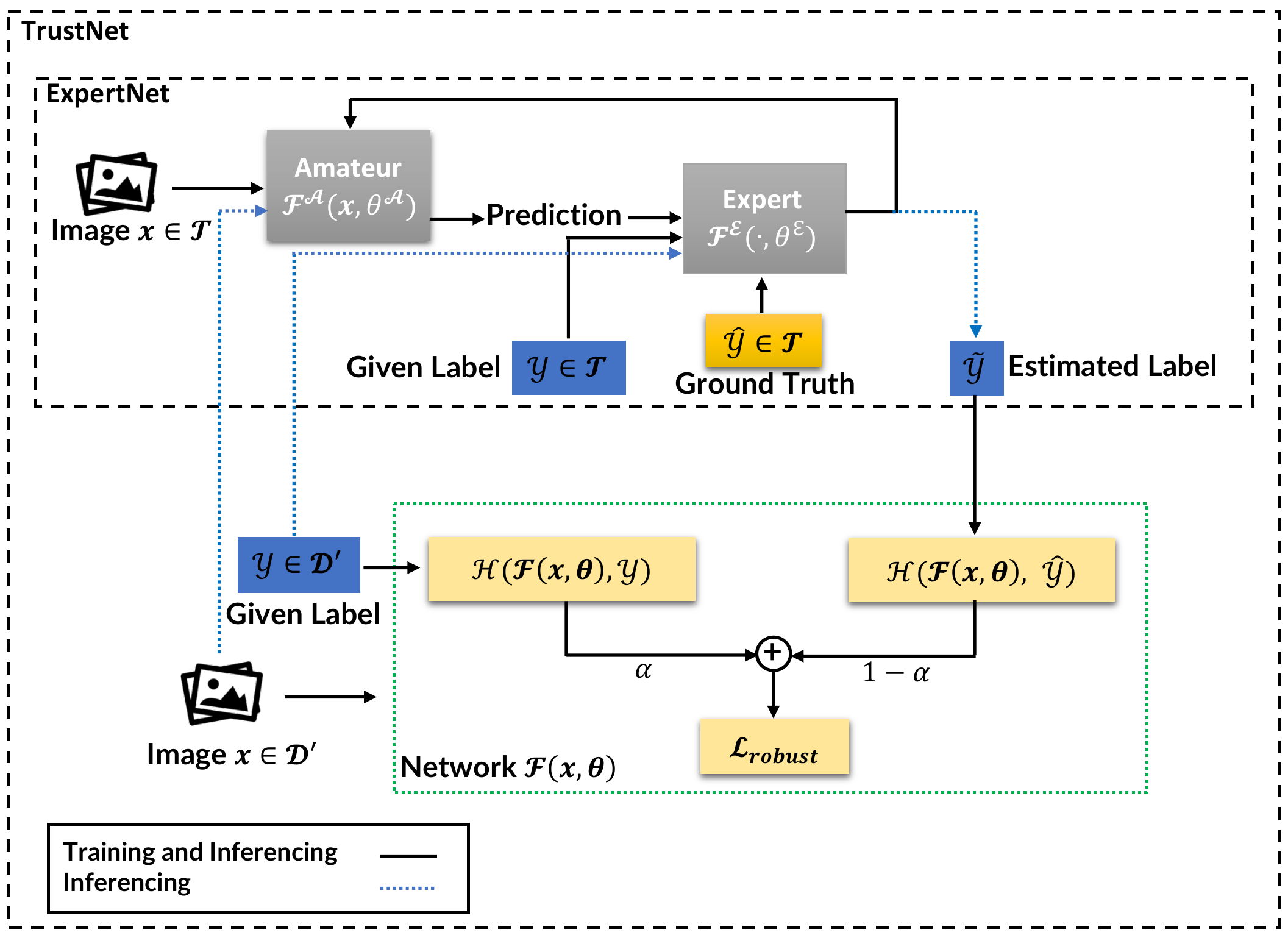}
	\centering
	\caption{\TNet architecture.}
	\label{fig:network}
\end{figure}

Consider extending the classification problem from Section~\ref{subsec:preliminaries} with a set of trusted data, $\mathcal{T} = \{(\boldsymbol{x}_1, {y}_1, \hat{y}_1),  (\boldsymbol{x}_2,  {y}_2, \hat{y}_1), ... , (\boldsymbol{x}_N, {y}_N, \hat{y}_N) \}$.
$\mathcal{T}$ is validated by experts and has for each sample $\boldsymbol{x}$ both given $y$ and true $\hat{y}$ class labels. Hence, our classification problem comprises two types of datasets: $\mathcal{T}$ and $\mathcal{D}$, where $\mathcal{D}$ has only the given class label $y$. The given class labels $y$ in both data sets are affected by the same noise pattern and noise ratio. Further, we assume that $\mathcal{T}$ is small compared to $\mathcal{D}$, i.e. $|\mathcal{T}| << |\mathcal{D}|$, due to the cost of experts' advise.


Corresponding to the two datasets, \TNet consists of two training routines highlighted by the top and bottom halves of Fig.~\ref{fig:network}. First (top half), \TNet leverages the trusted dataset to learn the underlying noise transition matrix via \system~\cite{ExpertNet}. \system is an adversarial network jointly learning from the given and true labels. Different from a fully fledged adversarial learning, \TNet only uses \system to learn the noise transition matrix instead of the representation of corrupted images. Second (bottom half), the trained \system is used to derive a dataset $\mathcal{D'}$ from $\mathcal{D}$ by enriching it with estimated class labels $\tilde{y}$ inferred by \system (blue path). Hence $\mathcal{D'} =  \{(\boldsymbol{x}_1, {y}_1, \tilde{y}_1),  (\boldsymbol{x}_2,  {y}_2, \tilde{y}_2), ... , (\boldsymbol{x}_N, {y}_N, \tilde{y}_N) \}$. Then, we train a deep neural network, \SubNet, on $\mathcal{D'}$ using the proposed robust loss function from Section~\ref{subsec:robustl}.
We note that the trusted data is used only to train \system, not \SubNet.

\subsection{Estimating Noise Transition Matrix}
\system is an adversarial network that consists of two neural networks: \ANet and \ENet. \ANet aims to classify images guided by the feedback from \ENet. \ENet acts as a supervisor who corrects the predictions of \ANet based on the ground truth.
Essentially, \ENet learns how to transform predicted labels to true labels, i.e., a reverse noise transition matrix.

During training, first \ANet provides for a sample $\boldsymbol{x}_k$ a prediction of the class probabilities $\boldsymbol{y}_k^\mathcal{A}$ to \ENet. \ENet uses $\boldsymbol{y}_k^\mathcal{A}$ concatenated with the given class label $y_k$ to learn to predict the ground truth class label $\hat{y_k}$. In turn, the predicted label from \ENet $y_k^\mathcal{E}$ is provided as feedback to train \ANet. In summary, training tries to minimize recursively the following two loss functions for \ANet, described by $\mathcal{F^\mathcal{A}}(\cdot, \boldsymbol{\theta}^\mathcal{A})$ and \ENet, described by $\mathcal{F^\mathcal{E}}(\cdot, \boldsymbol{\theta}^\mathcal{E})$:
\[
\min_{\boldsymbol{\theta}^\mathcal{A}} \mathcal{L}(\mathcal{F^\mathcal{A}}(\boldsymbol{x}_k, \boldsymbol{\theta}^\mathcal{A}), y_k^\mathcal{E})
\;\;\;\;\;\;\;\;  
\min_{\boldsymbol{\theta}^\mathcal{E}} \mathcal{L}(\mathcal{F^\mathcal{E}}(<\boldsymbol{y}_k^\mathcal{A}, y_k>,\mathcal{\boldsymbol{\theta}^\mathcal{E}}) , \hat{y}_k)
\]
where $<\cdot,\cdot>$ represents vector concatenation.~\cite{ExpertNet} provides the technical details.

The trained \system can estimate the true label from an image $\boldsymbol{x_k}$:
\begin{equation} \label{eq:enetpredict}
\tilde{y}_k = \mathcal{F^\mathcal{E}}(<\mathcal{F^\mathcal{A}}(\boldsymbol{x}_k, \boldsymbol{\theta}^\mathcal{A}),y_k>, \boldsymbol{\theta}^\mathcal{E}).
\end{equation}
Specifically, we use the trained \system to enrich and transform $\mathcal{D}$ in $\mathcal{D'}$ by incorporating for each image $\boldsymbol{x}_k$ the inferred class label $\tilde{y}_k$. Subsequently, we use  $\mathcal{D'}$ to train \SubNet\xspace via the loss function robust to noise from Section~\ref{subsec:robustl}. 


\subsection{Noise Robust Loss Function}
\label{subsec:robustl}

The given labels are corrupted by noise. Directly training on the given labels results in highly degraded performance as the neural network is not able to easily discern between clean and corrupted labels. To make the learning more robust to noise, \TNet proposes to modify the loss function to leverage both given labels $y$ and inferred labels $\tilde{y}$ from \system to train \SubNet.


The predicted label of \SubNet\xspace is compared, e.g., via cross-entropy loss, against both the given label and inferred label. The challenge is how to combine these two loss values. Ideally, for samples for which \system and \SubNet\xspace are highly accurate, the inferred label can be trusted more. On the contrary, for samples for which \system and \SubNet\xspace have low accuracy, the given labels can be trusted more.
Specifically, \TNet uses a weighted average between the loss of the predicted label from \SubNetk\xspace against both the given label $y_k$ and the \system 's inferred label $\tilde{y}_k$ with per sample weights $\alpha_k$ and $(1-\alpha_k)$ for all samples $\boldsymbol{x}_k$ in $\mathcal{D'}$.
Moreover, \TNet dynamically adjusts $\alpha_k$ after each epoch based on the observed learning performance of \SubNetk.  


In detail we use cross-entropy $H$ as standard loss measure to train our deep neural network \SubNetk:
\begin{equation} \label{eq:crossentropyloss}
    \mathcal{H}(\mathcal{F}(\boldsymbol{x}_k,\boldsymbol{\theta}) ,y_k) = - \sum_{i=0}^{c-1}\mathbbm{1}(y_k, c) \log \mathcal{F}(\boldsymbol{x}_k,\boldsymbol{\theta}) 
\end{equation}
where $\mathbbm{1}(y_k,c)$ is an indicator function equal to $1$ if $y_k = c$ and $0$ otherwise.
For each data point $\boldsymbol{x}_k$ in $\mathcal{D'}$, we assign weights of $\alpha_k$ and $(1-\alpha_k)$ to the cross-entropy of the given $y_k$ and inferred $\tilde{y}_k$ labels, respectively. We let ${\alpha_k} \in [0,1]$. Hence, we write the robust loss function $\mathcal{L}_{robust}$ as following: 
\begin{equation} \label{eq:trustloss}
\mathcal{L}_{robust}(\mathcal{F}(\boldsymbol{x}_k,\boldsymbol{\theta}), y_k, \tilde{y}_k) = \alpha_k \; \mathcal{H}(\mathcal{F}(\boldsymbol{x}_k, \boldsymbol{\theta}), y_k) + (1-\alpha_k) \; \mathcal{H}(\mathcal{F}(\boldsymbol{x}_k,\boldsymbol{\theta}), \tilde{y}_k).
\end{equation}
When the weight factor is low, we put more weight on the cross-entropy of inferred labels, and vice versa. In the following, we explain how to dynamically set $\alpha_k$ per epoch.

\subsubsection{Dynamic $\alpha_k$} 
Here we adjust $\alpha_k$ based on the uncertainty of \TNet and \system. When the learning capacities of \system and \TNet are higher (lower values of loss function), we have more confidence on the inferred labels and put more weight on the second term of Eq.~\ref{eq:trustloss}, i.e., smaller $\alpha_k$ values. As a rule of thumb, at the beginning $\alpha_k$ values are high since \TNet experiences higher losses at the start of training. Then $\alpha_k$ values gradually decrease with the growing capacity of \TNet.


Let $\alpha_{k,e}$ be the weight of the $k^{th}$ image at epoch $e$. We initialize $\alpha_{k,0}$ based on the entropy value $S$ from inferred class probabilities $\tilde{\boldsymbol{y}}_k$ of \system:
\[
S(\tilde{\boldsymbol{y}}_k) = - \sum_{i=0}^{c-1} \; \tilde{y}^i_k \; log \; \tilde{y}^i_k
\]
where $c$ is the number of classes and $\tilde{y}^i_k$ is the $i^{th}$ class probability of $\tilde{\boldsymbol{y}}_k$. We use \system since we do not have yet any predictions from \TNet's own neural network.

For subsequent epochs, $e>0$, we switch to \TNet as source of entropy values. We gradually adjust $\alpha_{k,e}$ based on the relative difference between current and previous epoch values:
\begin{equation} \label{eq:d_alpha}
\alpha_{k,e} = \alpha_{k,e-1}(1 + \frac{S(\boldsymbol{y}_k^\mathcal{F}(e))-S(\boldsymbol{y}_k^\mathcal{F}(e-1))} {S(\boldsymbol{y}_k^\mathcal{F}(e-1))}) \;\;\;\;\; \forall e>0,
\end{equation}
where $\boldsymbol{y}_k^\mathcal{F}(e)$ are the class probabilities predicted by \SubNet\xspace for the $k^{th}$ image at epoch $e$. When the entropy values decrease, we gain more confidence in \TNet and the weights on the inferred labels (1-$(1-\alpha))$ increase.

We summarize the training procedure of \TNet in Algorithm~\ref{alg:training}. Training \system consists of training two neural networks: \ENet , $\mathcal{F^\mathcal{E}}(\cdot, \boldsymbol{\theta}^\mathcal{E})$), and \ANet, $\mathcal{F}^\mathcal{A}(\cdot, \boldsymbol{\theta}^\mathcal{A})$, using the trusted data $\mathcal{T}$ for $E_{\system}$ epochs (line 1-4).
Then we need to compute the inferred labels for all data points in $\mathcal{D}$ to produce $\mathcal{D'}$ (line 5).
Finally, we train \TNet for $E_{\TNet}$ epochs (line 6-14). The initialization of $\alpha_k$ is via the entropy of the inferred labels (line 9) and then updated by the entropy of predicted labels (line 11). The robust loss function is computed accordingly (line 13).




\begin{algorithm}[tb]
\SetAlgoLined
\SetKwInOut{Input}{Input}\SetKwInOut{Output}{Output}
\Input{Trusted dataset $\mathcal{T}$, Untrusted dataset $\mathcal{D}$; Epochs $E_{\system}$,$E_{\TNet}$. \newline
Untrusted dataset $\mathcal{D}$ made of: Observed samples $\boldsymbol{x}$, Given labels $y$ \newline
Trusted dataset $\mathcal{T}$ made of: Observed samples $\boldsymbol{x}$, Given labels $y$, True labels $\hat{y}$ \\
}
\Output{Trained \TNet $\mathcal{F}(\boldsymbol{x},\boldsymbol{\theta})$}
Initialize $\mathcal{F}^\mathcal{A}$ and $\mathcal{F}^\mathcal{E}$ with random $\boldsymbol{\theta}^\mathcal{A}$ and $\boldsymbol{\theta}^\mathcal{E}$\\
\For{e = 0, 1, ..., $E_{\system}$ on $\mathcal{T}$ }{
    Train $\mathcal{F}^\mathcal{E}$ and $\mathcal{F}^\mathcal{A}$ \hfill \CommentSty{\#\system training}\\
    }

$\mathcal{D'} = \mathcal{D}$ extended with $\tilde{y} = \mathcal{F^\mathcal{E}}(<\mathcal{F^\mathcal{A}}(\boldsymbol{x}, \mathcal{\boldsymbol{\theta}^\mathcal{A}}),y>, \mathcal{\boldsymbol{\theta}^\mathcal{E}})$ \hfill  \CommentSty{\#\system inference}\\
Initialize $\mathcal{F}$ with random $\boldsymbol{\theta}$ \hfill
\CommentSty{\#\TNet training}\\
\For{e = 0, 1, ..., $E_{\TNet}$ on $\mathcal{D'}$ }{    
    \uIf{$e == 0$}{
    	$\alpha_{k,0} = S(\tilde{\boldsymbol{y}_k})$ 
    }
    \Else{
		$\alpha_{k,e} = \alpha_{k,e-1}(1 + \frac{S(\boldsymbol{y}_k^\mathcal{F}(e))-S(\boldsymbol{y}_k^\mathcal{F}(e-1))} {S(\boldsymbol{y}_k^\mathcal{F}(e-1))})$
    }
    Train $\mathcal{F}(\boldsymbol{x},\boldsymbol{\theta}_e)$ with
    $\alpha_{k,e} \; \mathcal{H}(\mathcal{F}(\boldsymbol{x}_k, \boldsymbol{\theta}_e), y_k) + (1-\alpha_{k,e}) \; \mathcal{H}(\mathcal{F}(\boldsymbol{x}_k, \boldsymbol{\theta}_e), \tilde{y}_k)$
    for each sample $k$ 
  }  
\caption{\TNet training}
\label{alg:training}
\end{algorithm}

\commentout{
\begin{algorithm}[tb]
	\SetAlgoLined
	\SetKwInOut{Input}{Input}\SetKwInOut{Output}{Output}
	\Input{Dataset $\mathcal{D}$, Trusted dataset $\mathcal{T}$, Epochs $E_{\system}$,$E_{\TNet}$. \newline
		Untrusted dataset $\mathcal{D}$ made of: Observed samples $\boldsymbol{x}$, Given labels $y$ \newline
		Trusted dataset $\mathcal{T}$ made of: Observed samples $\boldsymbol{x}$, Given labels $y$, True labels $\hat{y}$ \\
	}
	\Output{Trained \TNet $\mathcal{F}(\boldsymbol{x},\boldsymbol{\theta})$}
	Initialize $\mathcal{A}$ and $\mathcal{E}$ with random $\boldsymbol{\phi}$ and $\boldsymbol{\omega}$\\
	\For{e = 1, 2, ..., $E_{max}$ on $\mathcal{T}$ }{
		Train $\mathcal{E}, \mathcal{A}$ \hfill \CommentSty{\#\system training}\\
	}
	$\boldsymbol{\Tilde{y}}$ = $g^\mathcal{E}(<g^\mathcal{A}(\boldsymbol{x};\boldsymbol{\phi}), \boldsymbol{y}>;\boldsymbol{\omega})$ \hfill  \CommentSty{\#\system inference}\\
	Initialize $\mathcal{F}$ with random $\boldsymbol{\theta}$ \hfill
	\CommentSty{\#\TNet training}\\
	\For{e = 1, 2, ..., $E_{max}$ on $\mathcal{D'}$ }{    
		
		\uIf{$e == 1$}{
			$\alpha_{1,k} = H_0(\Tilde{\boldsymbol{y}}_k)$ 
		}
		\Else{
			$\alpha_{e,k} = \alpha_{e-1,k}(1 + \frac{H_e(\boldsymbol{y}^\mathcal{F}_k)-H_{e-1}(\boldsymbol{y}^\mathcal{F}_k)}{H_{e-1}(\boldsymbol{y}^\mathcal{F}_k)})$
		}
		Train $\mathcal{F}(\boldsymbol{x},\boldsymbol{\theta})$ with  $\alpha_{e,k}\mathcal{L}(\mathcal{F}(\boldsymbol{x}, \boldsymbol{\theta}) ,\boldsymbol{y}) + (1 - \alpha_{e,k})\mathcal{L}(\mathcal{F}(\boldsymbol{x},\boldsymbol{\theta}) ,\boldsymbol{\Tilde{y}})$ for each sample $k$

	}  
	\caption{\TNet training}
	\label{alg:training}
\end{algorithm}
}

\section{Evaluation}
In this section, we empirically compare \TNet against the state of the art noise, under both synthetic and real-world noises. We aim to show the effectiveness of \TNet via testing accuracy on diverse and challenging noise patterns.

\subsection{Experiments setup}
We consider three datasets: CIFAR-10~\cite{kriz-cifar}, CIFAR-100~\cite{kriz-cifar} and Clothing1M~\cite{xiao2015learning}.
CIFAR-10 and CIFAR-100 both have 60K images of $32 \times 32$-pixels organized in 10 and 100 classes, respectively. These two datasets have no or minimal label noise. We split the datasets into 50K training and 10K testing sets and inject into the training set the label noises from Section~\ref{sec:noisepattern}. We assume that 10\% of the training set forms the trusted data with access to the clean labels used as ground truth. We use this trusted set to learn the noise transition via \system. In turn, \system infers the estimated labels for the remaining training data. The whole training set is then used to train \TNet.
Clothing1M contains 1 million images scrapped from the Internet which we resize and crop to $224 \times 224$ pixels. Images are classified into $14$ class labels. These labels are affected by real-world noise stemming from the automatic labelling. Out of the 1 million images, a subset of trusted expert-validated images contains the ground truth labels. This subset consists of 47K and 10K images for training and testing, respectively. As for CIFAR-10 and CIFAR-100, we use the trusted set to train \system and infer the estimated labels for the rest of the dataset to train \TNet.
Note that for all three datasets, only training set is subject to label noise, not testing set.

The architecture of \ENet consists of a 4-layer feed-forward neural network with Leaky ReLU activation functions in the hidden layers and sigmoid in the last layer. This \ENet architecture is used across all datasets. \TNet and \ANet use the same architecture, which depends on the dataset. For CIFAR-10 \TNet and \ANet consist in an 8-layer CNN with 6 convolutional layers followed by 2 fully connected layers with ReLU activation functions as in~\cite{wang2018iterative}. For CIFAR-100 both rely on the ResNet44 architecture. Finally, Clothing1M uses pretrained ResNet101 with ImageNet. \TNet (\system) is trained for 120 (150) and 200 (180) for CIFAR-10 and CIFAR-100, respectively, using SGD optimizer with batch size 128, momentum $0.9$, weight decay $10^{-4}$, and learning rate $0.01$. Finally, Clothing1M uses 50 (35) epochs and batch size 32, momentum $0.9$, weight decay $5 \times 10^{-3}$ and learning rate $2 \times 10^{-3}$ divided by 10 every $5$ epochs.

Our target evaluation metric is the accuracy achieved on the clean testing set, i.e. not affected by noise. We compare \TNet against four noise resilient networks from the state of the art: SCL~\cite{wang2019symmetric}, D2L~\cite{ma2018dimensionality}, Forward~\cite{patrini2017making}, and Bootstrap~\cite{Reed:2015:ICLR:Bootstrap}. All training uses Keras v2.2.4 and Tensorflow v1.13.



\subsection{Synthetic Noise Patterns}
For CIFAR-10 and CIFAR-100, we inject synthetic noise. We focus on asymmetric noise patterns following a truncated normal and bimodal distribution, and symmetric noise, as discussed in Section~\ref{sec:noisepattern}. We inject noises with average rates $\varepsilon=0.4$, $0.5$ and $0.6$.
For truncated normal the target classes and variances are class 1 with $\sigma=0.5$ or $\sigma=5$ and 10 with $\sigma=1$ or $\sigma=10$ for CIFAR-10 and CIFAR-100, respectively.
For bimodal we use $\mu_1=2$, $\sigma_1=1$ plus $\mu_2=7$, $\sigma_2=3$ and $\mu_1=20$, $\sigma_1=10$ plus $\mu_2=70$, $\sigma_2=5$ for CIFAR-10 and CIFAR-100, respectively.

\subsubsection{CIFAR-10}
\begin{table}[t]
\centering
\caption{Accuracy on clean testing set for CIFAR-10 under 40\% and 60\% noise and patterns: i) symmetric, ii) bimodal with $\mu_1 = 2$, $\sigma_1 = 1$, $\mu_2 = 7$, $\sigma_2 = 3$, and iii) truncated normal with $\mu = 1$, $\sigma =[0.5, 5]$. Best results in bold.}
\label{tab_cifar10}
\resizebox{0.9\columnwidth}{!}{%
\begin{tabular}{|c|c|c|c|c|c|c|c|c|} 
\hline
\multirow{3}{*}{Methods}        & \multicolumn{2}{c|}{Symmetric}                                         & \multicolumn{2}{c|}{Bimodal Asymmetric}                                 & \multicolumn{4}{c|}{Truncated Normal Asymmetric}                                                         \\ 
\cline{2-9}
                                & \multirow{2}{*}{$\varepsilon = 0.4$} & \multirow{2}{*}{$\varepsilon = 0.6$} & \multirow{2}{*}{$\varepsilon = 0.4$} & \multirow{2}{*}{$\varepsilon = 0.6$} & \multicolumn{2}{c|}{$\varepsilon = 0.4$}         & \multicolumn{2}{c|}{$\varepsilon = 0.6$}          \\ 
\cline{6-9}
                                &                                    &                                   &                                    &                                    & $\sigma = 0.5$          & $\sigma = 5$             & $\sigma = 0.5$           & $\sigma = 5$             \\ 
\hline
TrustNet         & $77.03 \pm 0.32$  &  $61.22\pm 0.66$       & \pmb{$72.67\pm0.33$} & \pmb{$42.18\pm0.61$}    & $74.21 \pm 0.69$  &   $73.88 \pm 0.78$    &  $66.48 \pm 0.61$   & $67.23 \pm 0.57$    \\ 
\hline
SCL       & \pmb{$81.50\pm0.22$}  & \pmb{$73.13\pm0.12$} & $69.07\pm1.17$ &  $15.00\pm0.67$ & \pmb{$80.93\pm0.50$} & \pmb{$80.90\pm0.14$} & \pmb{$68.67\pm0.96$} & \pmb{$70.90\pm0.67$}     \\ 
\hline
D2L     & $75.87 \pm 0.33 $ &  $60.54\pm0.44$     &   $70.59. \pm 0.11$   &  $34.67 \pm 0.36$       &  $70.01 \pm 0.21$     & $71.22 \pm 0.57$    & $59.62 \pm 0.13$  & $62.35 \pm 0.43$                          \\ 
\hline
Forward  &$68.40\pm0.36$ & $51.27\pm1.11$   & $61.03\pm0.61$ & $33.27\pm0.53$ & $67.83\pm0.86$ & $68.63\pm0.65$ & $50.90\pm0.99$ & $51.53\pm0.74$     \\ 
\hline
Bootstrap   &$71.03\pm0.85$  & $56.47\pm1.18$ & $61.10\pm0.54$ & $31.17\pm0.59$ & $70.80\pm0.78$ & $71.07\pm0.78$ & $54.87\pm0.50$ & $55.80\pm1.23$   \\ 
\hline
\end{tabular}
}
\end{table}

We summarize the results of CIFAR-10 in Table~\ref{tab_cifar10}. We report the average and standard deviation across three runs. Overall the results are stable across different runs as seen from the low values of standard deviation. For readability reasons, we skip the results for 50\% noise in the table. These results follow the trend between 40\% and 60\% noise. 

\TNet achieves the highest accuracy for bimodal noises, which is one of the most difficult noise patterns based on Section~\ref{sec:noisepattern}. Here the accuracy of \TNet is consistently the best beating the second best method by increasing 2.4\%, 21.1\%, and 27.2\% for 40\%, 50\%, and 60\% noise ratios, respectively. At the same time, \TNet is the second best method for symmetric and truncated normal asymmetric noise. Here the best method is often SCL, which also leverages a modified loss function to enhance the per class accuracy using symmetric cross-entropy. This design targets direct symmetric noise where SCL outperforms \TNet. Considering the asymmetric truncated normal noise, the difference is smaller and decreasing with increasing noise ratio. At 60\% noise SCL is only marginally better by, on average, 2.9\%. Finally, test accuracy variations are not noticeable with increasing $\sigma$ values. All other baselines perform worse.

\subsubsection{CIFAR-100}

Table~\ref{tab_cifar100} summarizes the CIFAR-100 results over three runs. 
CIFAR-100 is more challenging than CIFAR-10 because it increases tenfold the number of classes while keeping the same amount of training data. This is clearly reflected in the accuracy results across all methods, but \TNet overall seems to be more resilient. Here, \TNet achieves the highest accuracy for both asymmetric noise patterns under all considered noise ratios. On average, the accuracy of \TNet is higher than SCL, the second best solution, by 2\%. The improvement is higher for higher noise ratios and lower variation, i.e., $\sigma = 1$. SCL outperforms \TNet on symmetric noise of low and middle intensity, i.e., $\varepsilon= [0.4, 0.5]$, but the difference diminishes with increasing noise, and at 60\% \TNet performs better. Different from CIFAR-10, test accuracy variations become noticeable for truncated normal noise with increasing $\sigma$ values producing a positive effect across most baselines. All other baselines perform worse.

\begin{table}[t]
\centering
\caption{Accuracy on clean testing set for CIFAR-100 under 40\% and 60\% noise and patterns: i) symmetric, ii) bimodal with  $\mu_1 = 20$, $\sigma_1 = 10$, $\mu_2 = 70$, $\sigma_2 = 5$, and iii) truncated normal with  $\mu = 10$, $\sigma =[1, 10]$. Best results in bold.}
\label{tab_cifar100}
\resizebox{0.9\columnwidth}{!}{%
\begin{tabular}{|c|c|c|c|c|c|c|c|c|} 
\hline
\multirow{3}{*}{Methods}        & \multicolumn{2}{c|}{Symmetric}                                         & \multicolumn{2}{c|}{Bimodal Asymmetric}                                 & \multicolumn{4}{c|}{Truncated Normal Asymmetric}                                                         \\ 
\cline{2-9}
                                & \multirow{2}{*}{$\varepsilon = 0.4$} & \multirow{2}{*}{$\varepsilon = 0.6$} & \multirow{2}{*}{$\varepsilon = 0.4$} & \multirow{2}{*}{$\varepsilon = 0.6$} & \multicolumn{2}{c|}{$\varepsilon = 0.4$}         & \multicolumn{2}{c|}{$\varepsilon = 0.6$}          \\ 
\cline{6-9}
                                &                                    &                                   &                                    &                                    & $\sigma = 1$          & $\sigma = 10$             & $\sigma = 1$           & $\sigma = 10$             \\ 
\hline
TrustNet     & $41.23 \pm 0.43$   &  \pmb{$29.11 \pm 0.12$}     & \pmb{$45.01 \pm 0.14$}& \pmb{$32.32 \pm 0.30$}       & \pmb{$37.66 \pm 0.36$}   &  \pmb{$44.56 \pm 0.42$}  & \pmb{$23.96 \pm 0.38$}   & \pmb{$33.29 \pm 0.41$}      \\ 
\hline
SCL         & \pmb{$42.30\pm0.36$} & $28.43\pm0.69$  & $43.57\pm0.42$ &  $30.70\pm0.88$  & $37.63\pm0.62$& $43.50\pm0.45$ &$19.20\pm0.57$ & $31.93\pm0.39$ \\ 
\hline
D2L    &$41.01 \pm 0.21$    &  $21.41 \pm 0.12$   & $32.47 \pm 0.43$   & $10.55 \pm 0.19$      & $10.66 \pm 0.16$     &  $10.32 \pm 0.21$ & $10.11 \pm 0.38$  & $10.05 \pm 0.14$ \\ 
\hline
Forward    & $36.40\pm0.37$ &  $16.00\pm0.80$      & $38.80\pm0.28$ &  $19.03\pm0.69$ & $34.03\pm0.33$ & $39.80\pm0.33$ & $10.27\pm0.47$ & $22.90\pm0.00$  \\ 
\hline
Bootstrap  & $28.40\pm0.16$ & $6.70\pm0.59$ & $32.17\pm0.62$ & $10.10\pm0.94$ & $27.23\pm0.71$ & $34.17\pm0.96$ & $6.10\pm0.16$ & $12.53\pm1.84$    \\ 
\hline
\end{tabular}
}
\end{table}

\subsection{Real-world Noisy Data: Clothing1M}

We use the noise pattern observed in real world data from the Clothing1M dataset to demonstrate the effectiveness and importance of estimating the noise transition matrix in \TNet. Table~\ref{tab:clothing1M} summarizes the results on the testing accuracy for \TNet and the four baselines. The measured average noise ratio across all classes is 41\%. Here, \TNet achieves the highest accuracy, followed by SCL and Forward. Forward is another approach trying to estimate the noise transition matrix. The better accuracy of \TNet is attributed to the additional label estimation from  \system learned via the trusted data and dynamically weighting the loss functions from given and inferred labels. The promising results here confirm that the novel learning algorithm of \TNet can tackle challenging label noise patterns appearing in real-world datasets.

\begin{table}[t]
\centering
\caption{Accuracy on clean testing set of real-world noisy Clothing1M.}
\label{tab:clothing1M}
\begin{tabular}{|c|c|c|c|c|c|} 
\hline
Methods  & TrustNet & SCL & D2L & Forward & Bootstrap  \\ 
\hline
Accuracy(\%) & {\bf 73.06}     & 70.78    & 69.43 & 70.04 & 68.77    \\
\hline
\end{tabular}
\end{table}

\section{Discussion}
In this section, we discuss testing accuracy on clean and noisy samples.
The bounds derived in Section~\ref{sec:noisepattern} consider testing on labels affected by the same noise as training data. This is due to the fact that the ground truth of labels is usually assumed unknown and not even available in the typical learning scenarios. However, the accuracy measured from the noisy testing data provides no information about how effective resilient networks defend the training process against the noisy data.    
Hence, related work on noisy label learning tests on clean samples, which show different trends as hinted in the evaluation section. Fig.~\ref{fig:empcleannoise} compares the two approaches across different noise patterns empirically. In general, in the case of clean test labels, the testing accuracy decreases with increasing noise ratios almost linearly. As for noisy labels, testing accuracy shows a clear quadratic trend, first decreasing before increasing again. Specifically,  the lowest accuracy happens at noise ratio of 0.6 and 0.8 in the case of the truncated normal noise example with $\mu=1$ and $\sigma=0.5$ (Fig.~\ref{subfig:normal_cn}), and the bimodal noise example with $\mu_1=2, \sigma_1=0.5, \mu_2=7, \sigma_2=5$ (Fig.~\ref{subfig:bimodal_cn}), respectively. The reason is that specific class examples with erroneous labels become more numerous than examples with the true class, e.g., more truck images are labelled as an automobile than automobile images. Such an effect is missing when testing on clean labels.

\begin{figure*}[t]
	\centering
	{\hfill
	\subfloat[Truncated normal 
	]{
	    \label{subfig:normal_cn}
	\includegraphics[width=0.30\linewidth ,height=2.65cm]{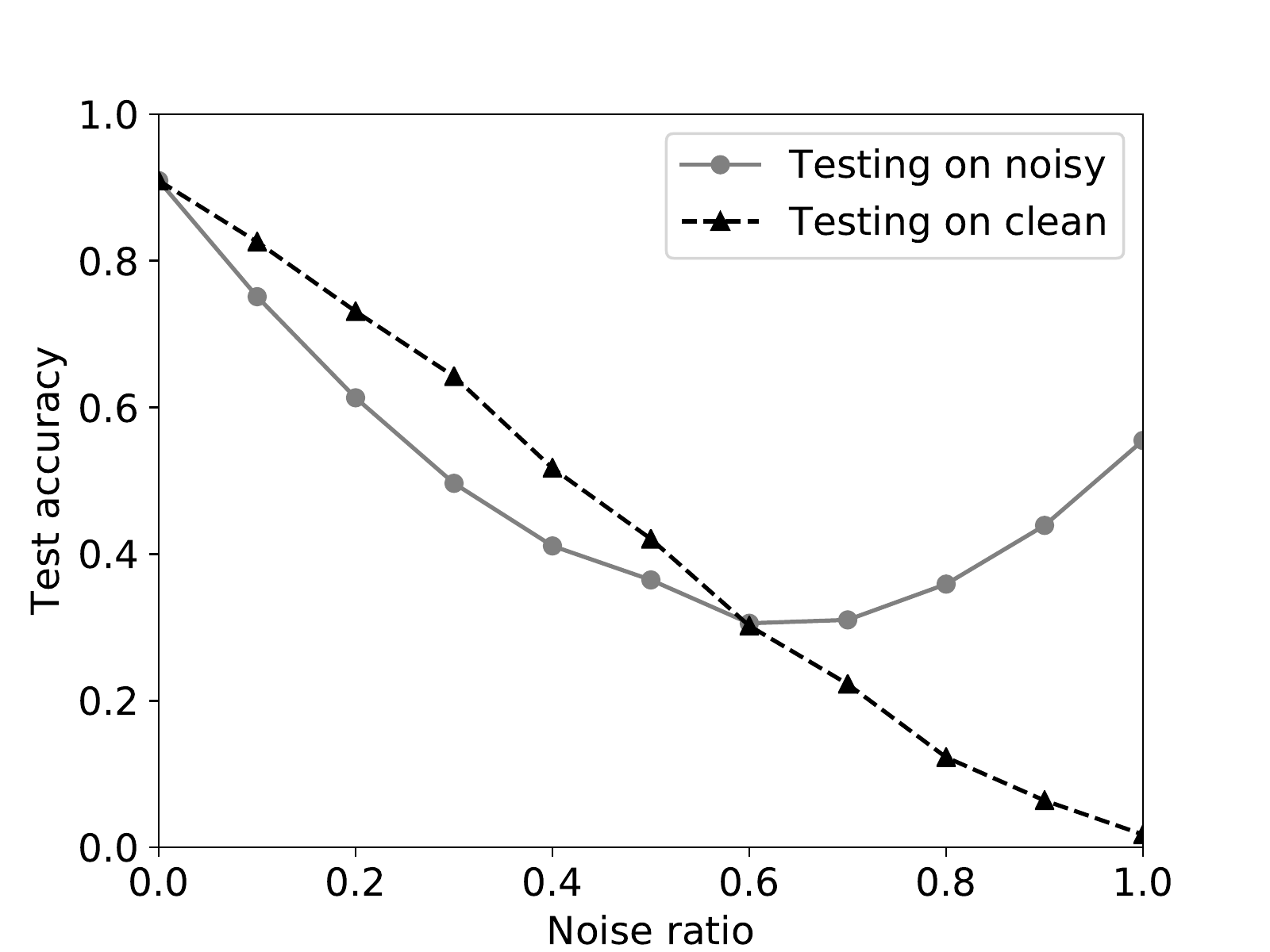}
	}\hfill
	  \subfloat[Bimodal 
	  ]{
	    \label{subfig:bimodal_cn}
	    \includegraphics[width=0.3\linewidth,height=2.65cm]{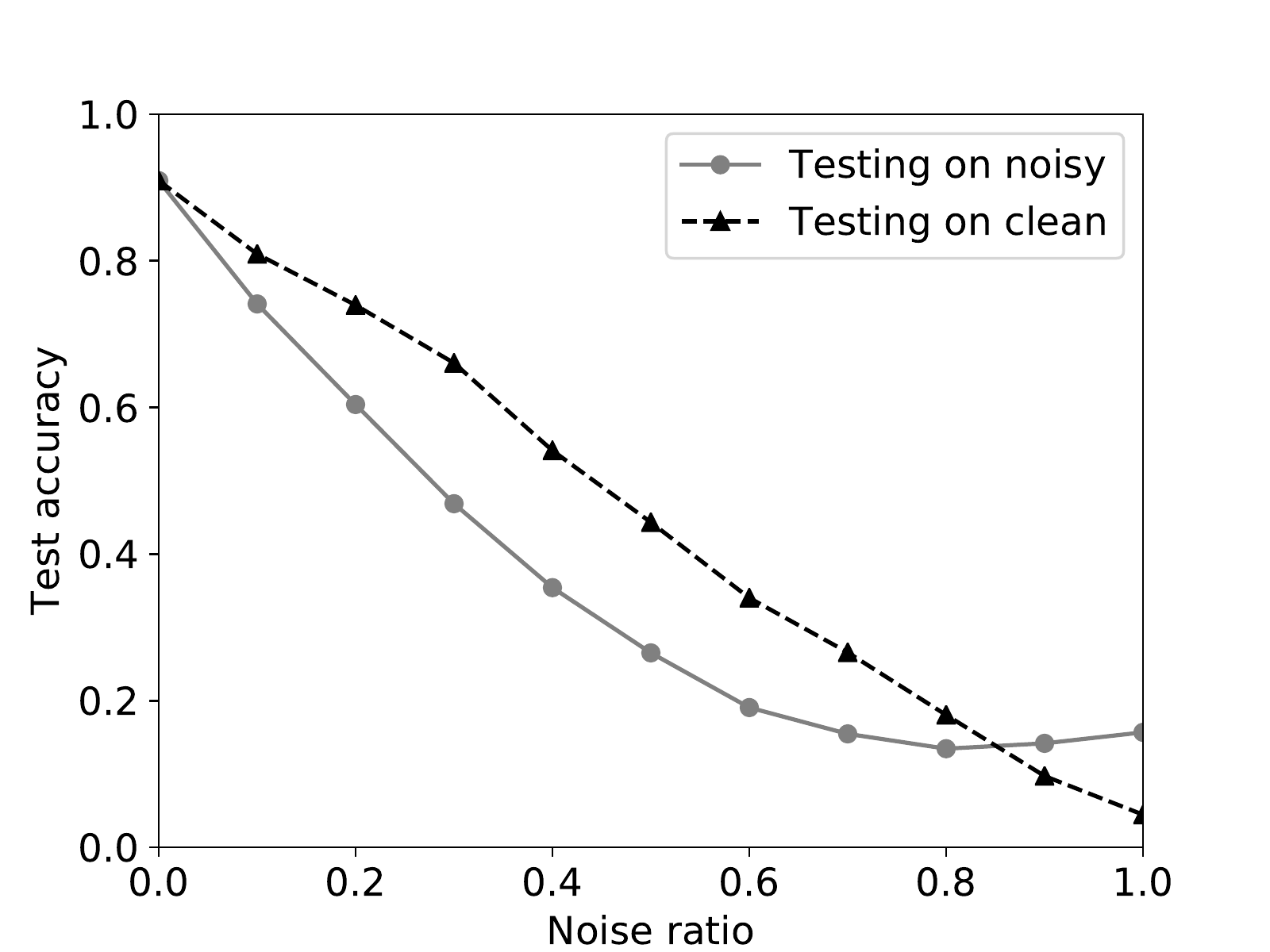}}
\hfill}
	\caption{Empirical testing on noisy and clean labeled data on CIFAR-10.}
	\label{fig:empcleannoise}
\end{figure*}

\section{Loss Function}
\% This should be comment for main paper!
In this Section, we present the related work loss functions. 
\subsection{Symmetric Cross Entropy}
Based on ~\cite{wang2019symmetric}, the loss function shows a Symmetric Cross Entropy which is inspired by symmetric KL-divergence.

\begin{equation} \label{eq:6}
\begin{split}
   \mathcal{L}_{total} = \alpha \times -\frac{1}{n}\sum_{i=1}^n\boldsymbol{y}_i\log \mathcal{F}(\mathcal{\theta}, \boldsymbol{x}_i) + \\
   \beta \times -\frac{1}{n}\sum_{i=1}^n\mathcal{F}(\mathcal{\theta}, \boldsymbol{x}_i)\log \boldsymbol{y}_i
   \end{split}
\end{equation}

where $\alpha$ and $\beta$ are weights of cross-entropy loss functions. 

\subsection{D2L}
D2L loss function ~\cite{ma2018dimensionality} is a weighted average which Local Intrinsic Dimensionality
score is an estimation for weights of predicted and given labels. The estimation of LID score which is formulated: 
\begin{equation} \label{eq:7}
\begin{split}
   \hat{LID}(x, X_B) =  -(\frac{1}{k}\sum_{i=1}^k \log \frac{r_i(g(x),g(X_B))}{r_{max}(g(x),g(X_B)})^{-1}
   \end{split}
\end{equation}

where $g(.)$ is the output of the second-to-last layer of the network, $r_i(g(x),g(X_B))$ is the distance of $g(x)$ to its $i$-th nearest neighbour in the transformed set $g(X_B)$, and $r_{max}$ represents the radius of the neighborhood. 
To train a Deep Neural network with noisy labels, they propose to reduce the effort of noisy labels on learning the true data distribution using the following adaptive LID-corrected labels:
\begin{equation} \label{eq:8}
\begin{split}
   y^* = \alpha_i y + (1 - \alpha_i) \hat{y}
   \end{split}
\end{equation}

where $\alpha_i$ is a LID-based factor that updates at the $i$-th training epoch:
\begin{equation} \label{eq:9}
\begin{split}
   \alpha_i = \exp(-\lambda \frac{\hat{LID}_i}{\min_{j=0}^{i-1}\hat{LID}_j})
   \end{split}
\end{equation}

where $\lambda = \frac{i}{T}$ is a weighting that indicates decreasing confidence in the raw labels when the training proceeds to the dimensionality expansion stage (that is, when LID begins to increase). The training loss can then be refined as:

\begin{equation} \label{eq:10}
\begin{split}
   \mathcal{L}_{total} = -\frac{1}{n}\sum_{i=1}^n\boldsymbol{y}_i^*\log \mathcal{F}(\mathcal{\theta}, \boldsymbol{x}_i)
   \end{split}
\end{equation}

\subsection{Forward}
Other modification in loss function is performed by estimating transition matrix between noisy labels and clean labels. In ~\cite{patrini2017making}, they find noise transition matrix and update loss function which is formulated by

\begin{equation} \label{eq:11}
\begin{split}
   \mathcal{L}_{total} = -\frac{1}{n}\sum_{i=1}^n\boldsymbol{y}_i\log (\boldsymbol{T}_i^T . \mathcal{F}(\mathcal{\theta}, \boldsymbol{x}_i))
   \end{split}
\end{equation}

where $T$ is the transition matrix which is defined by 
\begin{equation} \label{eq:12}
\begin{split}
   T_{i,j} = p(\boldsymbol{y}_j | \boldsymbol{y}_i)
   \end{split}
\end{equation}

$p(.)$ is the probability of changing true label to noisy label.
\subsection{Bootstrap}
Reed et al. ~\cite{reed2014training} proposed a weighted average of raw labels and their predicted labels. The loss function equal to:

\begin{equation} \label{eq:13}
\begin{split}
   \mathcal{L}_{total} = -\frac{1}{n}\sum_{i=1}^n (\alpha \boldsymbol{y}_i + (1 - \alpha) \mathcal{F}(\mathcal{\theta}, \boldsymbol{x}_i)) \log \mathcal{F}(\mathcal{\theta}, \boldsymbol{x}_i) 
   \end{split}
\end{equation}

where $\alpha$ is the weight factor. 

\clearpage

\bibliographystyle{splncs}
\bibliography{example_paper.bib}
\end{document}